\theoremstyle{plain}
\newtheorem{theorem}{Theorem}[section]
\newtheorem{proposition}[theorem]{Proposition}
\newtheorem{lemma}[theorem]{Lemma}
\theoremstyle{definition}
\theoremstyle{remark}
\pgfplotsset{compat=1.5, every axis/.append style={font=\small, /pgf/number format/1000 sep={}}, every mark/.append style={solid},}
\icmltitlerunning{Towards Understanding Gradient Approximation in Deep Declarative Networks}
\begin{document}

\twocolumn[
\icmltitle{Towards Understanding Gradient Approximation in\\Equality Constrained Deep Declarative Networks}



\icmlsetsymbol{equal}{*}

\begin{icmlauthorlist}
\icmlauthor{Stephen Gould}{ANU}
\icmlauthor{Ming Xu}{ANU}
\icmlauthor{Zhiwei Xu}{ANU}
\icmlauthor{Yanbin Liu}{ANU}
\end{icmlauthorlist}

\icmlaffiliation{ANU}{Australian National University, Canberra, Australia}
\icmlcorrespondingauthor{Stephen Gould}{stephen.gould@anu.edu.au}

\icmlkeywords{Deep Declarative Networks}

\vskip 0.3in
]



\printAffiliationsAndNotice{}  

\begin{abstract}
We explore conditions for when the gradient of a deep declarative node can be approximated by ignoring constraint terms and still result in a descent direction for the global loss function. This has important practical application when training deep learning models since the approximation is often computationally much more efficient than the true gradient calculation. We provide theoretical analysis for problems with linear equality constraints and normalization constraints, and show examples where the approximation works well in practice as well as some cautionary tales for when it fails.
\end{abstract}


\vspace{-6mm}
\section{Introduction}
\vspace{-1mm}
\label{sec:intro}

This paper investigates certain approximations to gradient calculations for differentiable constrained optimization problems. Our focus is on continuous optimizations problems that may be embedded within deep learning models~\cite{Gould:TR2016, Amos:ICML2017, Agrawal:NIPS19, Gould:PAMI2021, Blondel:NIPS22}. This is in contrast to works that compute search directions for back-propagating through discrete optimization problems where a true gradient does not exist or is uninformative (i.e., zero almost everywhere), e.g.,~\cite{Blondel:ICML20, Berthet:NIPS20, Vlastelica:ICLR20, Peterson:NIPS22}.

For continuous constrained optimization problems the gradient of a solution with respect to parameters of the problem (i.e., inputs) can be determined by implicit differentiation of the problem's optimality conditions~\cite{Amos:ICML2017, Agrawal:NIPS19, Gould:PAMI2021}. One of the main computational difficulties in the presence of constraints is evaluating quantities of the form $(A H^{-1}A\transpose)^{-1}$ where $A$ encodes first derivatives of the constraint functions and $H$ encodes second derivatives of the objective and constraints.

\citet{Gould:OTSDM2022} observed that for deep models involving optimal transport---a well-known differentiable optimization problem---ignoring the constraints in the backward pass, i.e., treating the problem as if it were unconstrained, still allows the model to learn while greatly speeding the backward pass. This prompts the question explored in this paper: why and when does this gradient approximation work?


\section{Gradient Approximation}
\label{sec:grad_approx}

In this section we develop theoretical insights for when back-propagating through a differentiable optimization problem using an approximate gradient gives a descent direction for the global loss. Full proofs can be found in the appendix.

The following result for the derivative of the solution to parametrized equality constrained optimisation problems comes from~\citet{Gould:PAMI2021}[Prop.~4.5].

\begin{proposition}
	\emph{\cite{Gould:PAMI2021}.}
	Consider functions $f: \reals^n \times \reals^m \to \reals$ and $h: \reals^n \times
	\reals^m \to \reals^p$. Let
	\begin{align*}
		y(x) \in \left\{ \begin{array}{lll}
			\text{argmin}_{u \in \reals^m} & f(x, u) &\\
			\text{subject to} & h_i(x, u) = 0, & i = 1, \ldots, p
		\end{array} \right\}.
	\end{align*}
	Assume that $y(x)$ exists, that $f$ and $h = [h_1,\dots,h_p]\transpose$ are
	$2^{\text{nd}}$-order differentiable in the neighborhood of $(x, y(x))$,
	and that $\text{rank}(\dd[Y] h(x, y)) = p$.
	Then for $H$ non-singular
	\begin{align*}
		\dd y(x) &= H^{-1} \! A\transpose \!\! \left( A H^{-1} \! A\transpose \right)^{\!-1} \!\!\left(A H^{-1} B - C\right)  - H^{-1} B
	\end{align*}
	where
	$A = \dd[Y] h(x,y) \in \reals^{p \times m}$,
	$B = \dd[XY]^2 f(x, y) - \sum_{i = 1}^{p} \lambda_i \dd[XY]^2 h_i(x, y) \in \reals^{m \times n}$,
	$C = \dd[X] h(x,y) \in \reals^{p \times n}$, 
	$H = \dd[YY]^2 f(x, y) - \sum_{i = 1}^{p} \lambda_i \dd[YY]^2 h_i(x, y) \in \reals^{m \times m}$, 
	and $\lambda \in \reals^p$ satisfies $\lambda\transpose \!A = \dd[Y] f(x, y)$.
	\label{prop:ddnmain}
\end{proposition}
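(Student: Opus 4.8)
The plan is to characterise $y(x)$ through the first-order optimality (KKT) conditions of the equality constrained problem and then apply the implicit function theorem, eliminating the multiplier by a Schur-complement computation. Introduce the Lagrangian $L(x, u, \mu) = f(x, u) - \sum_{i=1}^{p} \mu_i h_i(x, u)$. Since $\text{rank}(\dd[Y] h(x, y)) = p$, the linear independence constraint qualification holds at $(x, y(x))$, so the minimiser $y = y(x)$ admits a unique multiplier $\lambda \in \reals^p$ with $\dd[Y] L(x, y, \lambda) = 0$ (equivalently $\lambda\transpose A = \dd[Y] f(x, y)$) together with primal feasibility $h(x, y) = 0$. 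Stacking the stationarity and feasibility conditions gives a map
\[
 F(x, u, \mu) = \begin{bmatrix} \dd[Y] L(x, u, \mu)\transpose \\ h(x, u) \end{bmatrix} \in \reals^{m + p}
\]
that vanishes at $(x, y(x), \lambda(x))$ and is differentiable near this point because $f$ and $h$ are twice differentiable there.

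Next I would differentiate the identity $F(x, y(x), \lambda(x)) = 0$ with respect to $x$. The Jacobian of $F$ in $(u, \mu)$ at the solution is the KKT matrix
\[
 K = \begin{bmatrix} H & -A\transpose \\ A & 0 \end{bmatrix},
\]
using $\dd[YY]^2 L = H$ and $\dd[Y] h = A$ for the stationarity block, $-A\transpose$ for its derivative in $\mu$, and the $\mu$-independence of $h$ for the bottom block; the Jacobian of $F$ in $x$ is $\left[\begin{smallmatrix} B \\ C \end{smallmatrix}\right]$ since $\dd[XY]^2 L = B$ and $\dd[X] h = C$. The chain rule then yields
\[
 K \begin{bmatrix} \dd y(x) \\ \dd \lambda(x) \end{bmatrix} = - \begin{bmatrix} B \\ C \end{bmatrix}.
\]
With $H$ non-singular one has $\det K = \det(H)\,\det(A H^{-1} A\transpose)$, so $K$ is invertible precisely when $A H^{-1} A\transpose$ is; under this condition the implicit function theorem guarantees that $y$ (and $\lambda$) is differentiable near $x$ and the displayed linear system determines $\dd y(x)$ uniquely.

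Finally I would solve the block system by elimination: the first block row gives $\dd y(x) = H^{-1}\bigl(A\transpose \dd\lambda(x) - B\bigr)$; substituting into the second block row $A\,\dd y(x) = -C$ gives $A H^{-1} A\transpose \dd\lambda(x) = A H^{-1} B - C$, hence $\dd\lambda(x) = (A H^{-1} A\transpose)^{-1}(A H^{-1} B - C)$; back-substitution produces the claimed expression $\dd y(x) = H^{-1} A\transpose (A H^{-1} A\transpose)^{-1}(A H^{-1} B - C) - H^{-1} B$. The linear algebra here is routine; the genuine obstacle is the regularity step, i.e.\ establishing that $y(x)$ is actually differentiable, which requires invertibility of $K$ and hence an implicit non-singularity assumption on $A H^{-1} A\transpose$ going beyond "$H$ non-singular" alone (it holds, for instance, under the standard second-order sufficiency conditions with $H$ positive definite on $\ker A$). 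One must also track transpose conventions carefully between Jacobians (row vectors) and gradients (column vectors) when assembling $F$ and differentiating it.
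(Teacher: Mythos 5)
Your derivation is correct and is essentially the argument the paper defers to in \citet{Gould:PAMI2021}: implicitly differentiate the stacked Lagrangian stationarity and feasibility conditions, then eliminate $\dd \lambda(x)$ from the resulting block linear system. Your closing observation---that invertibility of $A H^{-1} A\transpose$ is an additional implicit hypothesis not guaranteed by non-singularity of $H$ and full row rank of $A$ alone (e.g., $H = \mathrm{diag}(1,-1)$, $A = (1,\,1)$)---is also accurate.
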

Symbol $D$ denotes the total or partial (with respect to the subscripted variable) derivative operator. We refer the reader to \citet{Gould:PAMI2021} for the full derivation.

Given an incoming gradient of a loss with respect to the output (i.e., solution) $\dd \Ell(y)$, back-propagation computes the gradient of the loss with respect to the input $x$ via the chain rule of differentiation as $\dd \Ell(x) = \dd \Ell(y) \dd y(x)$. As mentioned, however, terms involving $A$, namely $( A H^{-1} \! A\transpose)^{-1}$, may present significant computational challenges. Ignoring such terms gives a computationally much simpler expression, but how well does it approximate the true gradient?

Formally, define $\widehat{H} =  \dd[YY]^2 f(x, y)$ so that $H = \widehat{H} -  \sum_{i = 1}^{p} \lambda_i \dd[YY]^2 h_i(x, y)$ for a constrained problem. Let $v\transpose = \dd \Ell(y) \in \reals^{1 \times m}$ be the incoming gradient of the loss $\Ell$ with respect to output $y$, let $g\transpose = v\transpose \dd y(x)$ be the true gradient of the loss with respect to input $x$ and let $\widehat{g}\transpose = v\transpose \widehat{\dd y}(x) = -v\transpose \widehat{H}^{-1} B$ be the approximation obtained by ignoring constraints. We wish to understand when $-\widehat{g}$ is a descent direction for $\Ell$, i.e., when is
\begin{align}
	g\transpose \widehat{g} \geq 0 \,?
\end{align}
To simplify analysis and make progress towards some theoretical insights we will assume a single constraint function $h(u) = 0$ that is independent of $x$. Furthermore, we will assume that the objective function takes the special form $f(x, u) = x\transpose u + \tilde{f}(u)$. An example of this is the objective function for the optimal transport problem. Together, these assumptions imply that $C = 0$ and $B = I$ in \propref{prop:ddnmain}.

Substituting for $\dd y(x)$ and $\widehat{\dd y}(x)$ under these assumptions we have that $-\widehat{g}$ is a descent direction if and only if,
\begin{align}
	v\transpose \! \left(\! H^{-1} - \frac{H^{-1} a a\transpose H^{-1}}{a\transpose H^{-1} a}\right) \! \widehat{H}^{-1} v &\geq 0,
	\label{eqn:descentcond}
\end{align}
where we have written $a\transpose = A = \dd[Y] h(y) \in \reals^{1 \times m}$ to make it clear that we are only considering problems with a single constraint.\footnote{We recognize that for a single constraint the quantity $a\transpose H^{-1} a$ is trivial to invert and hence the approximation here offers little computational advantage. Nevertheless, as we will see, analysis from this simplification is instructive for more general settings.} We now explore two special cases.

\subsection{Special Case: Linear Constraints}

Consider the case of a single linear equality constraint, $a\transpose u = d$. In this case we have $\dd[YY]^2 h(u) = 0$ and therefore $H = \widehat{H}$. The condition that our approximate gradient $\widehat{\dd y}(x) = -H^{-1}$ always leads to a descent direction is
\begin{align}
	\min_{w} \, w\transpose \left(I - \frac{a a\transpose H^{-1}}{a\transpose H^{-1}a} \right) w &\geq 0
\end{align}
which holds if and only if\footnote{See \appref{sec:lindescentcond} for complete derivation.}
\begin{align}
	\max_{\|w\| = 1} \, w\transpose \left( \frac{a a\transpose H^{-1}}{a\transpose H^{-1}a} \right) w &\leq 1
	\label{eqn:lindescentcond}
\end{align}
where we have written $w = H^{-1}v$ from \eqnref{eqn:descentcond}.

Unfortunately this is only true when $\text{cond}(H) = 1$ as the following proposition shows.

\begin{proposition} \label{prop:badnews}
	Let $H \in \reals^{m \times m}$ be a non-singular symmetric matrix and let $a$ be an arbitrary vector in $\reals^n$. Then,
	\begin{align*}
		1 
		\leq 
		\max_{\|w\| = 1} \, w\transpose \left(\frac{aa\transpose H^{-1}}{a\transpose H^{-1} a}\right) w
		\leq
		\frac{1}{2} + \frac{\textup{cond}(H)}{2}.
	\end{align*}
\end{proposition}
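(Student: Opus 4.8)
The plan is to compute the maximum exactly and then read off the two inequalities. Write $M = \frac{aa\transpose H^{-1}}{a\transpose H^{-1}a}$ and abbreviate $b = H^{-1}a$, so that $M = \frac{ab\transpose}{a\transpose b}$; here $a \neq 0$ is needed for $M$ to be defined, $b \neq 0$ since $H$ is non-singular, and $a\transpose b = a\transpose H^{-1}a > 0$ in the case of interest where $H$ is positive definite. Because $w\transpose M w = w\transpose \bigl(\tfrac12(M + M\transpose)\bigr) w$ for all $w$, the maximum of $w\transpose M w$ over $\|w\| = 1$ equals the largest eigenvalue of the symmetric matrix $\tfrac{1}{2\,a\transpose b}\bigl(ab\transpose + ba\transpose\bigr)$.

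The crux is diagonalizing $ab\transpose + ba\transpose$, which has rank at most $2$: it vanishes on the orthogonal complement of $\mathrm{span}\{a,b\}$, and a short $2 \times 2$ computation in the spanning pair $\{a,b\}$ shows its eigenvalues on that subspace are $a\transpose b \pm \|a\|\,\|b\|$. By Cauchy--Schwarz the larger of the two, $a\transpose b + \|a\|\,\|b\|$, is nonnegative, hence is the top eigenvalue; dividing by the positive scalar $2\,a\transpose b$ gives
\begin{align*}
	\max_{\|w\| = 1} w\transpose M w \;=\; \frac{a\transpose b + \|a\|\,\|b\|}{2\,a\transpose b} \;=\; \frac12 + \frac{\|a\|\,\|H^{-1}a\|}{2\,a\transpose H^{-1}a}.
\end{align*}

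Both bounds then follow from this exact expression. For the lower bound, Cauchy--Schwarz applied to $a\transpose H^{-1}a = a\transpose(H^{-1}a)$ gives $a\transpose H^{-1}a \leq \|a\|\,\|H^{-1}a\|$, so the fraction is at least $\tfrac12$ and the whole expression is at least $1$. For the upper bound, use $a\transpose H^{-1}a \geq \|a\|^2 / \lambda_{\max}(H)$ (from $H \preceq \lambda_{\max}(H)\,I$, hence $H^{-1} \succeq \lambda_{\max}(H)^{-1} I$) together with $\|H^{-1}a\| \leq \|a\| / \lambda_{\min}(H)$ (from $\|H^{-1}\|_2 = \lambda_{\min}(H)^{-1}$); these bound the fraction by $\lambda_{\max}(H)/\bigl(2\,\lambda_{\min}(H)\bigr) = \tfrac12\,\textup{cond}(H)$.

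The main obstacle is the first half: spotting that a non-symmetric quadratic form is controlled by the symmetric part of its matrix, and then carrying out the rank-two eigen-decomposition of $ab\transpose + ba\transpose$. I would also note that the spectral estimates used for the upper bound need $H$ to be definite, so the relevant reading of the proposition is with $H$ positive definite --- which is exactly the situation here, since $H$ plays the role of (a convex subproblem's) reduced Hessian.
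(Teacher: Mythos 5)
Your proof is correct and follows essentially the same route as the paper's: symmetrize the rank-one form, diagonalize $ab\transpose + ba\transpose$ to get eigenvalues $a\transpose b \pm \|a\|\|b\|$, obtain the exact maximum $\tfrac12 + \tfrac{\|a\|\|H^{-1}a\|}{2|a\transpose H^{-1}a|}$, and then apply Cauchy--Schwarz for the lower bound and spectral bounds on $H^{-1}$ for the upper bound. Your closing remark that the upper-bound estimates require $H$ to be definite is a fair observation --- the paper's proof uses $|a\transpose H^{-1}a| \geq \sigma_{\min}(H^{-1})\|a\|^2$, which likewise implicitly assumes definiteness even though the proposition is stated for general non-singular symmetric $H$.
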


The lower bound is bad news. It states that, in general, we cannot guarantee that the approximation will be a descent direction for all incoming loss gradients (unless $H \propto I$). But let us not despair. This is in the worst case. The next result concerns the expected value of $g\transpose \widehat{g}$ and tells us that, if $H^{-1}v$ is isotropic Gaussian distributed, then $-v\transpose \widehat{\dd y}(x)$ is a descent direction of the loss on average.
\begin{proposition}
	\label{prop:single_lin_expect}
	Let $w \sim \N(0, I)$. Then
	\begin{align*}
		\expectation{w\transpose \! \left(I - \frac{aa\transpose H^{-1}}{a\transpose H^{-1} a}\right) \! w}
		= m - 1 \geq 0.
	\end{align*}
\end{proposition}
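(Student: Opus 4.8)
The plan is to compute the expectation term by term using linearity and the standard fact that for $w \sim \mathcal{N}(0,I)$ and any fixed matrix $M \in \mathbb{R}^{m \times m}$ one has $\mathbb{E}[w^\top M w] = \operatorname{tr}(M)$. Applying this to the quadratic form in the statement gives
\begin{align*}
\mathbb{E}\!\left[ w^\top\!\left( I - \frac{a a^\top H^{-1}}{a^\top H^{-1} a}\right)\! w \right]
= \operatorname{tr}(I) - \frac{\operatorname{tr}\!\left(a a^\top H^{-1}\right)}{a^\top H^{-1} a}
= m - \frac{\operatorname{tr}\!\left(a a^\top H^{-1}\right)}{a^\top H^{-1} a}.
\end{align*}

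The only thing left is to simplify the trace in the numerator. Using the cyclic property of the trace, $\operatorname{tr}(a a^\top H^{-1}) = \operatorname{tr}(a^\top H^{-1} a) = a^\top H^{-1} a$, since $a^\top H^{-1} a$ is a scalar. Hence the ratio equals $1$, and the expectation is $m - 1$. Finally, $m - 1 \geq 0$ holds because we need $m \geq 1$ for the problem to be nontrivial (the solution $y$ lives in $\mathbb{R}^m$, and with $m = 0$ there is nothing to optimize); one should note that the single linear constraint $a^\top u = d$ with $a \neq 0$ forces $m \geq 1$ anyway, so the inequality is immediate.

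There is essentially no obstacle here: the result is a one-line trace computation once the right identities are invoked. The only point requiring a modicum of care is confirming that $a \neq 0$ (so that $a^\top H^{-1} a \neq 0$ and the expression is well-defined) — but this is guaranteed by the rank assumption $\operatorname{rank}(D_Y h(x,y)) = p = 1$ in \propref{prop:ddnmain}, which forces $a = D_Y h(y)^\top$ to be nonzero; also $H$ nonsingular ensures $H^{-1}$ exists. I would state these well-definedness remarks briefly and then present the two-line computation above.
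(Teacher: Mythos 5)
Your proposal is correct and follows essentially the same route as the paper: apply $\mathbb{E}[w^\top M w]=\operatorname{tr}(M)$ for $w\sim\mathcal{N}(0,I)$ and then show the trace of the rank-one term equals $1$. The only minor difference is that you get $\operatorname{tr}\bigl(a a^\top H^{-1}/(a^\top H^{-1}a)\bigr)=1$ directly from the cyclic property of the trace, whereas the paper routes this step through its eigenvalue-spectrum lemma for $ab^\top+ba^\top$; your shortcut is, if anything, cleaner.
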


The result can be extended to multiple ($1 \leq p \leq m$) linear equality constraints $Au = d$ as follows.
\begin{proposition}
	\label{prop:multi_lin_expect}
	Let $w \sim \N(0, I)$. Then
	\begin{align*}
		\expectation{w\transpose \! \left(I - A\transpose \! \left(A H^{-1} A\transpose\right)^{-1} \! A H^{-1}\right) \! w} &= m - p \geq 0.
	\end{align*}
\end{proposition}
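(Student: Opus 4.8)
The plan is to reduce the expectation of the quadratic form to a trace. For $w \sim \N(0, I)$ we have $\expectation{w w\transpose} = I$, so for any fixed matrix $M \in \reals^{m \times m}$,
$\expectation{w\transpose M w} = \expectation{\operatorname{tr}(M w w\transpose)} = \operatorname{tr}(M\,\expectation{w w\transpose}) = \operatorname{tr}(M)$.
Taking $M = I - A\transpose (A H^{-1} A\transpose)^{-1} A H^{-1}$, the proposition becomes the purely linear-algebraic claim that $\operatorname{tr}(M) = m - p$.

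Next I would compute the trace of the subtracted term. Write $P = A\transpose (A H^{-1} A\transpose)^{-1} A H^{-1}$. By cyclic invariance of the trace, moving the leading $A\transpose$ to the right gives $\operatorname{tr}(P) = \operatorname{tr}\!\left((A H^{-1} A\transpose)^{-1} (A H^{-1} A\transpose)\right) = \operatorname{tr}(I_p) = p$. (Equivalently, a direct computation shows $P^2 = P$, since the inner factor $A H^{-1} A\transpose (A H^{-1} A\transpose)^{-1}$ collapses to $I_p$; hence $P$ is an oblique projection onto the range of $A\transpose$, and because $\operatorname{rank}(A) = p$ its rank, and therefore its trace, equals $p$.) Combining with $\operatorname{tr}(I_m) = m$ yields $\operatorname{tr}(M) = m - p$, and the full row rank hypothesis on $A$ forces $p \le m$, so $m - p \ge 0$.

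There is no real obstacle: once the Gaussian quadratic-form identity is invoked the argument is a one-line trace computation, and \propref{prop:single_lin_expect} is recovered as the special case $p = 1$ with $A = a\transpose$. The only points worth a word of care are bookkeeping ones — confirming that $A H^{-1} A\transpose$ is invertible (which holds under the standing assumptions that $A$ has full row rank and $H$ is nonsingular, positive definite in the applications of interest), and noting that it is precisely this invertibility that makes $P$ idempotent in the alternative derivation.
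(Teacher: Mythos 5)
Your proposal is correct and follows essentially the same route as the paper: reduce the Gaussian expectation of the quadratic form to a trace, then use cyclic invariance to collapse $\operatorname{tr}\!\left(A\transpose (A H^{-1} A\transpose)^{-1} A H^{-1}\right)$ to $\operatorname{tr}(I_p) = p$. The remarks on idempotency of the oblique projection and on invertibility of $A H^{-1} A\transpose$ are sound but optional; the paper invokes the same full-rank and nonsingularity hypotheses at the corresponding step.
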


This result is encouraging: for linear equality constrained problems we can expect the approximate gradient to be a descent direction. Next we turn our attention to a non-linear constraint where the story is not as straightforward.

\subsection{Special Case: Normalization Constraint}

We now consider the case of a single non-linear constraint, the normalization constraint, $\|u\|^2 = 1$, which occurs in many problems such as projection onto the $L_2$-sphere and eigen decomposition.

Once again, let $\widehat{H} = \dd[YY]^2 f(x, y)$ and $H = \dd[YY]^2 f(x, y) - \lambda \dd[YY]^2 h(y) = \widehat{H} - \lambda I$. We will assume that $\widehat{H}^{-1}$ and $H^{-1}$ exist.\footnote{This implies, in particular, that $\lambda$ is not an eigenvalue of $\widehat{H}$, which is clearly not true for eigen decomposition (where we also have $B \neq I$). Still, some useful insights can be gained. A similar argument may be possible using pseudo-inverses or going back to the optimality conditions and deriving gradients directly.} Here we have $a \propto y$ so the general condition for the approximate gradient $\widehat{g} = -v\transpose \widehat{H}^{-1}$ to be a descent direction is
\begin{align}
	v\transpose \left(H^{-1} - \frac{H^{-1} y y\transpose H^{-1}}{y\transpose H^{-1} y}\right) \widehat{H}^{-1} v &\geq 0.
\end{align}
The left-hand side represents $g\transpose \widehat{g}$. As for the linear equality constrained case, we can compute its expected value.
\begin{proposition}
	\label{prop:norm_expect}
	Let $H^{-1}v \sim \N(0, I)$ and other quantities as defined above for the normalization constrained special case. Then
	\begin{align*}
		\expectation{g\transpose \widehat{g}}
		&= \sum_{i=1}^{m} \frac{\lambda_i - \lambda}{\lambda_i} - \frac{y\transpose \widehat{H}^{-1} y}{y\transpose H^{-1} y}
	\end{align*}
	where $\lambda_1 \leq \lambda_2 \leq \cdots \leq \lambda_m$ are the eigenvalues of $\widehat{H}$.
\end{proposition}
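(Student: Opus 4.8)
The plan is to begin from the expression for $g\transpose\widehat{g}$ displayed just before the statement and expand it into two terms, $g\transpose\widehat{g} = v\transpose H^{-1}\widehat{H}^{-1}v - (v\transpose H^{-1}y)(y\transpose H^{-1}\widehat{H}^{-1}v)/(y\transpose H^{-1}y)$, and then take the expectation term by term. Since the randomness lives entirely in $w := H^{-1}v \sim \N(0,I)$ while $y$, $H$, $\widehat{H}$ (and hence the scalar $y\transpose H^{-1}y$) are fixed, the first move is the change of variables $v = Hw$; by symmetry of $H$ this gives $v\transpose H^{-1} = w\transpose$. The only probabilistic tool needed is then the identity $\expectation{w\transpose M w} = \textup{tr}(M)$, valid for $w \sim \N(0,I)$ and \emph{any} square matrix $M$ (no symmetry required).

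Carrying this out, the first term becomes $w\transpose\widehat{H}^{-1}Hw$, with expectation $\textup{tr}(\widehat{H}^{-1}H)$; using $H = \widehat{H} - \lambda I$ this equals $\textup{tr}(I - \lambda\widehat{H}^{-1}) = m - \lambda\sum_{i=1}^m \lambda_i^{-1} = \sum_{i=1}^m (\lambda_i - \lambda)/\lambda_i$, where $\lambda_1,\dots,\lambda_m$ are the eigenvalues of $\widehat{H}$. For the second term, $v\transpose H^{-1}y = w\transpose y$ and $y\transpose H^{-1}\widehat{H}^{-1}v = y\transpose H^{-1}\widehat{H}^{-1}Hw$. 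The one genuinely useful observation is that $H = \widehat{H} - \lambda I$ commutes with $\widehat{H}$, hence with $\widehat{H}^{-1}$ and $H^{-1}$, so $H^{-1}\widehat{H}^{-1}H = \widehat{H}^{-1}$ and this factor collapses to $y\transpose\widehat{H}^{-1}w$. The second term is then $(w\transpose y)(y\transpose\widehat{H}^{-1}w)/(y\transpose H^{-1}y) = w\transpose(yy\transpose\widehat{H}^{-1})w/(y\transpose H^{-1}y)$, whose expectation (the denominator being deterministic) is $\textup{tr}(yy\transpose\widehat{H}^{-1})/(y\transpose H^{-1}y) = (y\transpose\widehat{H}^{-1}y)/(y\transpose H^{-1}y)$ by the cyclic property of the trace. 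Subtracting the two contributions gives the claimed identity. An equivalent and arguably cleaner route is to diagonalize $\widehat{H} = Q\,\textup{diag}(\lambda_1,\dots,\lambda_m)\,Q\transpose$: then $H$, $\widehat{H}$ and their inverses are simultaneously diagonalized, and after the orthogonal change of variables $w \mapsto Q\transpose w$ (which preserves the $\N(0,I)$ law) the whole computation reduces to scalar sums over the eigenvalues.

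I do not anticipate a substantive obstacle — the argument is essentially a direct computation — so the ``hard part'' is really just the bookkeeping. The points that require care are: (i) performing the change of variables consistently, since the Gaussian lives in the $w = H^{-1}v$ coordinates and every occurrence of $v$ must be rewritten accordingly; (ii) remembering that $\expectation{w\transpose M w} = \textup{tr}(M)$ needs no symmetry of $M$, which matters because the cross term $yy\transpose\widehat{H}^{-1}$ is not symmetric; and (iii) invoking the commutativity $H^{-1}\widehat{H}^{-1}H = \widehat{H}^{-1}$, without which the mixed factor $y\transpose H^{-1}\widehat{H}^{-1}v$ would not reduce to a clean expression. As a consistency check, setting $\lambda = 0$ (inactive constraint, $H = \widehat{H}$) yields $\expectation{g\transpose\widehat{g}} = m - 1$, in agreement with Proposition~\ref{prop:single_lin_expect}.
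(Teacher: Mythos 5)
Your proposal is correct and follows essentially the same route as the paper's proof: the substitution $w = H^{-1}v$, the commutativity of $H = \widehat{H} - \lambda I$ with $\widehat{H}$ to collapse the mixed factor, and the Gaussian quadratic-form identity $\expectation{w\transpose M w} = \trace{M}$ (with the same remark that symmetry of $M$ is not needed). The only cosmetic difference is that you split the expression into two scalar terms before taking expectations, whereas the paper keeps it as a single matrix quadratic form and factors out $H^{-1}$ on both sides; the eigendecomposition you mention as an alternative is exactly what the paper uses to evaluate $\trace{\widehat{H}^{-1}H}$.
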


The above result is for general Hessian matrix $\widehat{H}$ and arbitrary $\lambda$. Let us consider two important (non-exhaustive) cases to give concrete bounds.

\begin{proposition}
	\label{prof:norm_expect_cases}
	Let $\widehat{H} \succ 0$, and let $g$ and $\widehat{g}$ be the true and approximate gradients, respectively, as defined above.

	\begin{minipage}{\columnwidth} \begin{itemize}
		\denselist \centering
		\item[(i)] If $\lambda < \lambda_1$, then $\expectation{g\transpose \widehat{g}} \geq 0$;
		\item[(ii)] If $\lambda > \lambda_m$, then $\expectation{g\transpose \widehat{g}} \leq 0$,
	\end{itemize} \end{minipage}

	\noindent
	where $\lambda_1$ and $\lambda_m$ are the smallest and largest eigenvalues of $\widehat{H}$, respectively.
\end{proposition}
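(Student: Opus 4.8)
The plan is to start from the closed form in \propref{prop:norm_expect}, namely $\expectation{g\transpose \widehat{g}} = \sum_{i=1}^{m} \frac{\lambda_i - \lambda}{\lambda_i} - \frac{y\transpose \widehat{H}^{-1} y}{y\transpose H^{-1} y}$, and control the sign of the right-hand side in each of the two regimes for $\lambda$. First I would introduce the spectral decomposition $\widehat{H} = \sum_i \lambda_i q_i q_i\transpose$ with $\{q_i\}$ orthonormal, and expand $y = \sum_i c_i q_i$ (with $\|y\| = 1$ so the $c_i$ are not all zero). Since $H = \widehat{H} - \lambda I$, this gives $y\transpose \widehat{H}^{-1} y = \sum_i c_i^2/\lambda_i$ and $y\transpose H^{-1} y = \sum_i c_i^2/(\lambda_i - \lambda)$.

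The key observation is that the ratio can be written as a convex combination of the scalars $t_i \defeq \frac{\lambda_i - \lambda}{\lambda_i}$. Indeed, setting $w_i = \frac{c_i^2/(\lambda_i - \lambda)}{\sum_j c_j^2/(\lambda_j - \lambda)}$ and using $c_i^2/\lambda_i = t_i \cdot c_i^2/(\lambda_i - \lambda)$, a one-line manipulation yields $\frac{y\transpose \widehat{H}^{-1} y}{y\transpose H^{-1} y} = \sum_i w_i t_i$. In case (i), the hypothesis $\lambda < \lambda_1 \le \lambda_i$ forces $\lambda_i - \lambda > 0$ for every $i$, so each $w_i \ge 0$, they sum to one, and each $t_i > 0$; in case (ii), $\lambda > \lambda_m \ge \lambda_i$ forces $\lambda_i - \lambda < 0$, so the $w_i$ are ratios of two negatives and are again nonnegative and normalized, while each $t_i < 0$. (Note that $\widehat{H} \succ 0$ together with either hypothesis ensures $\lambda$ is not an eigenvalue of $\widehat{H}$, so $H^{-1}$ exists and the denominator $\sum_j c_j^2/(\lambda_j - \lambda)$ is a nonzero sum of terms of one sign.)

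From here the argument is short. In case (i), since all $t_i > 0$ we have $\sum_i t_i \ge \max_j t_j \ge \sum_i w_i t_i$, hence $\expectation{g\transpose \widehat{g}} = \sum_i t_i - \sum_i w_i t_i \ge 0$. In case (ii), since all $t_i < 0$ we have $\sum_i t_i \le \min_j t_j \le \sum_i w_i t_i$, hence $\expectation{g\transpose \widehat{g}} \le 0$. Both use only the elementary facts that a sum of numbers of one sign dominates (is dominated by) its largest (smallest) term, and that a convex combination lies between the extremes.

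The only part needing care is establishing the convex-combination identity and checking that the weights $w_i$ are genuinely nonnegative and sum to one in both regimes, since this is precisely where the sign hypotheses $\lambda < \lambda_1$ and $\lambda > \lambda_m$ are used; everything else is bookkeeping. I do not anticipate a genuine obstacle — the main risk is forgetting the edge condition $y \neq 0$ (which holds because $\|y\|^2 = 1$) and the existence of $H^{-1}$, both of which are supplied by the hypotheses.
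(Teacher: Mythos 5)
Your proof is correct and follows essentially the same route as the paper: both start from the closed form in \propref{prop:norm_expect}, bound the ratio $\frac{y\transpose \widehat{H}^{-1} y}{y\transpose H^{-1} y}$ between the extremal values of $\frac{\lambda_i - \lambda}{\lambda_i}$, and then observe that the leftover sum has a definite sign because every term $\frac{\lambda_i-\lambda}{\lambda_i}$ shares that sign. The only difference is cosmetic: you establish the bound via an explicit convex-combination identity in the shared eigenbasis, whereas the paper maximizes the equivalent quadratic form $y\transpose H^{1/2}\widehat{H}^{-1}H^{1/2}y$ (resp.\ with $(-H)^{1/2}$ in case (ii)) over the unit sphere.
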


In summary, for the former case $-\widehat{g}$ is a descent direction on average, whereas for the latter case it is an ascent direction! Analogous results hold for $\widehat{H} \prec 0$.


\section{Examples and Experiments}
\label{sec:experiments}

In this section we experimentally validate the findings from above on three different optiimization problems. Our experimental setup is depicted in \figref{fig:experimental_setup}. Briefly, a data generating network provides input for a differentiable optimization problem. We train the data generating network so that the solution of the optimization problem matches some predetermined target. Further details are provided in \appref{sec:experimental_details}.

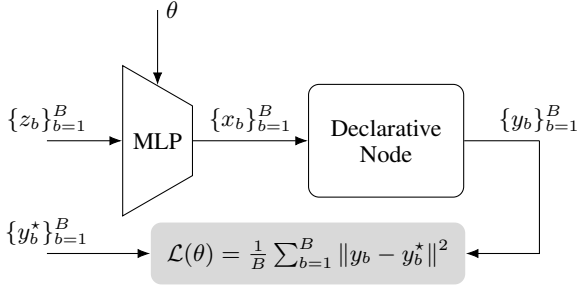
\begin{figure}
	\centering \small
	\begin{tikzpicture}
		\node[draw, rectangle, rounded corners, minimum height=1.5cm, inner sep=8pt, text width=1.5cm, align=center] (b) at (0,0) {Declarative\\Node};
		\node[draw, trapezium, rotate=-90, minimum width=2cm] (a) at ($(b.west) - (2,0)$) {};
		\node[] at (a) {MLP};
		\draw[-{Latex[]}, black] ($(a.south) - (1, 0)$) -- (a) node[pos=0, above] {$\{z_b\}_{b=1}^{B}$};
		\draw[-{Latex[]}, black] (a) -- (b) node[midway, above] {$\{x_b\}_{b=1}^{B}$};
		\draw[black] (b) -- ($(b.east) + (1.0, 0)$) node[pos=1, above] {$\{y_b\}_{b=1}^{B}$};
		
		\draw[-{Latex[]}, black] ($(a.west) + (0, 1)$) -- (a) node[pos=0, right] {$\theta$};
		
		\node[rounded corners, fill=black!15, inner sep=6pt] (c) at ($(b.west) - (0.0, 1.5)$) {$\Ell(\theta) = \frac{1}{B} \sum_{b=1}^{B} \|y_b - y_b^\star\|^2$};
		\draw[-{Latex[]},black] ($(a.south) - (1, 1.5)$) -- (c) node[pos=0, above] {$\{y^\star_b\}_{b=1}^{B}$};
		\draw[-{Latex[]},black] ($(b.east) + (1.0, 0)$) -- ($(b.east) + (1.0, -1.5)$) -- (c);
		
	\end{tikzpicture}
	\caption{Common experimental setup to compare behavior of approximate and exact gradients of constrained differentiable optimisation problems in a deep declarative network. Training data is a batch of randomly sampled input-target pairs $(z_b, y_b^\star) \in \reals^d \times \Y$. The input $z_b$ passes through a multi-layer perceptron to generate the parametrization $x_b$ for a  declarative node whose output (i.e., optimal value) is $y_b$. Thus $y_b$ is ultimately a function of the input $z_b$ and network parameters $\theta$. Training aims to adjust $\theta$ so as to minimize the square difference between output $y_b$ and target $y_b^\star$.}
	\label{fig:experimental_setup}
\end{figure}

\subsection{Euclidean Projection onto $L_2$-sphere}

Let us start with the simple problem of projecting a point $x \in \reals^n$ onto the unit sphere,
\begin{align}
	y(x) \in \left\{ \begin{array}{ll}
		\text{argmin} & \frac{1}{2} \|u - x\|^2 \\
		\text{subject to} & \|u\|_2 = 1
	\end{array} \right\}.
\end{align}
Here we have closed-form solution, $y = \frac{1}{\|x\|} x$,
with true and expected gradients given by
\begin{align}
	\dd y(x) = \frac{1}{\|x\|} \left(I - yy\transpose\right)
	\quad \text{and} \quad
	\widehat{\dd y}(x) = I.
\end{align}
The approximate gradient always gives a descent direction (when $\dd y(x)$ exists) since $I - yy\transpose$ is positive semidefinite.

Experimental results in \figref{fig:proj} confirm that the approximate gradient is always a descent direction, i.e., $g\transpose \widehat{g} > 0$, (bottom plots), and appears to work well for learning the parameters of the MLP especially during early iterations (top plots).

\begin{figure}
	\centering \small
	\setlength{\tabcolsep}{2pt}
	\begin{tabular}{cc}
		\includegraphics[width=0.24\textwidth]{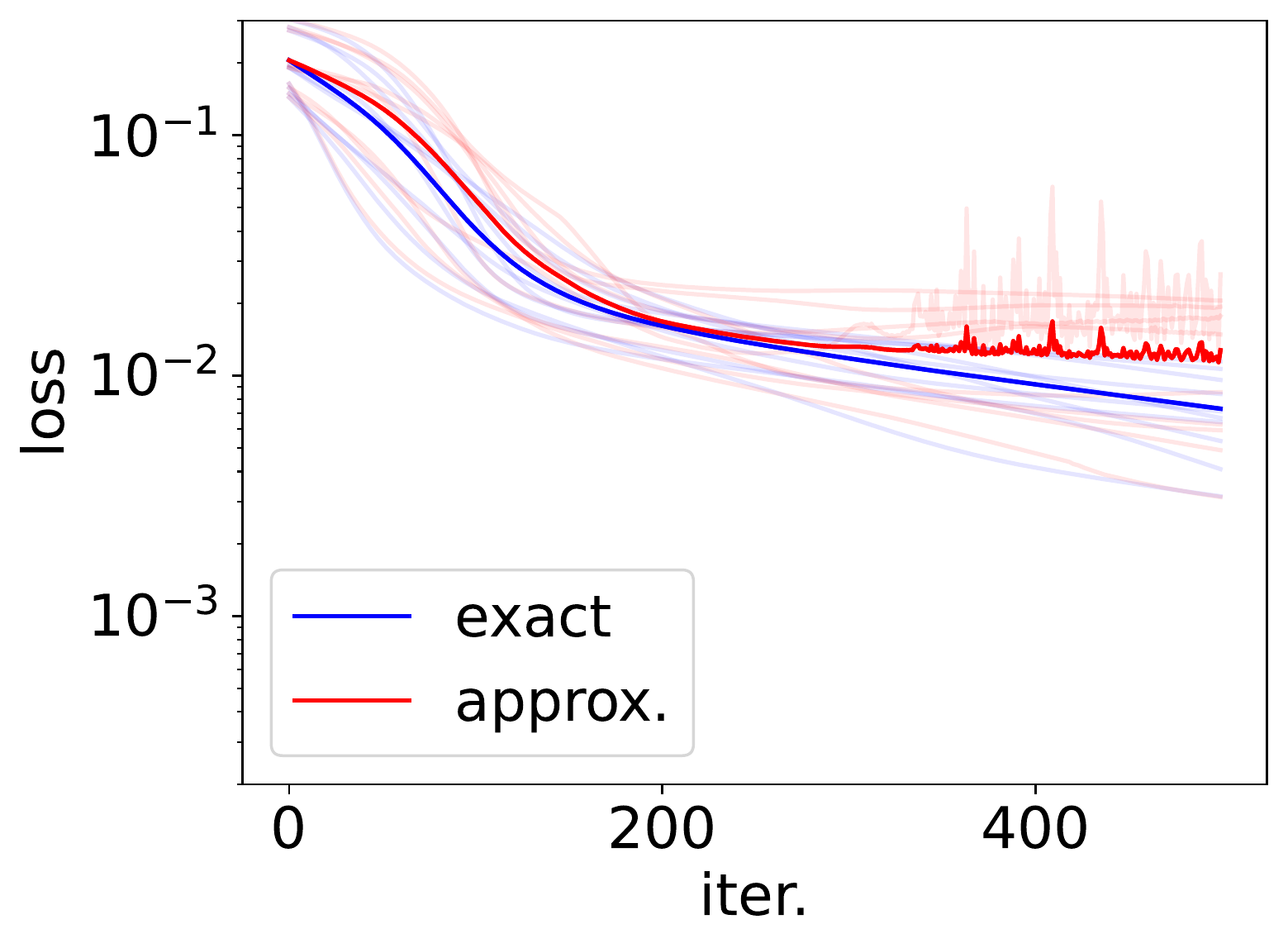} &
		\includegraphics[width=0.24\textwidth]{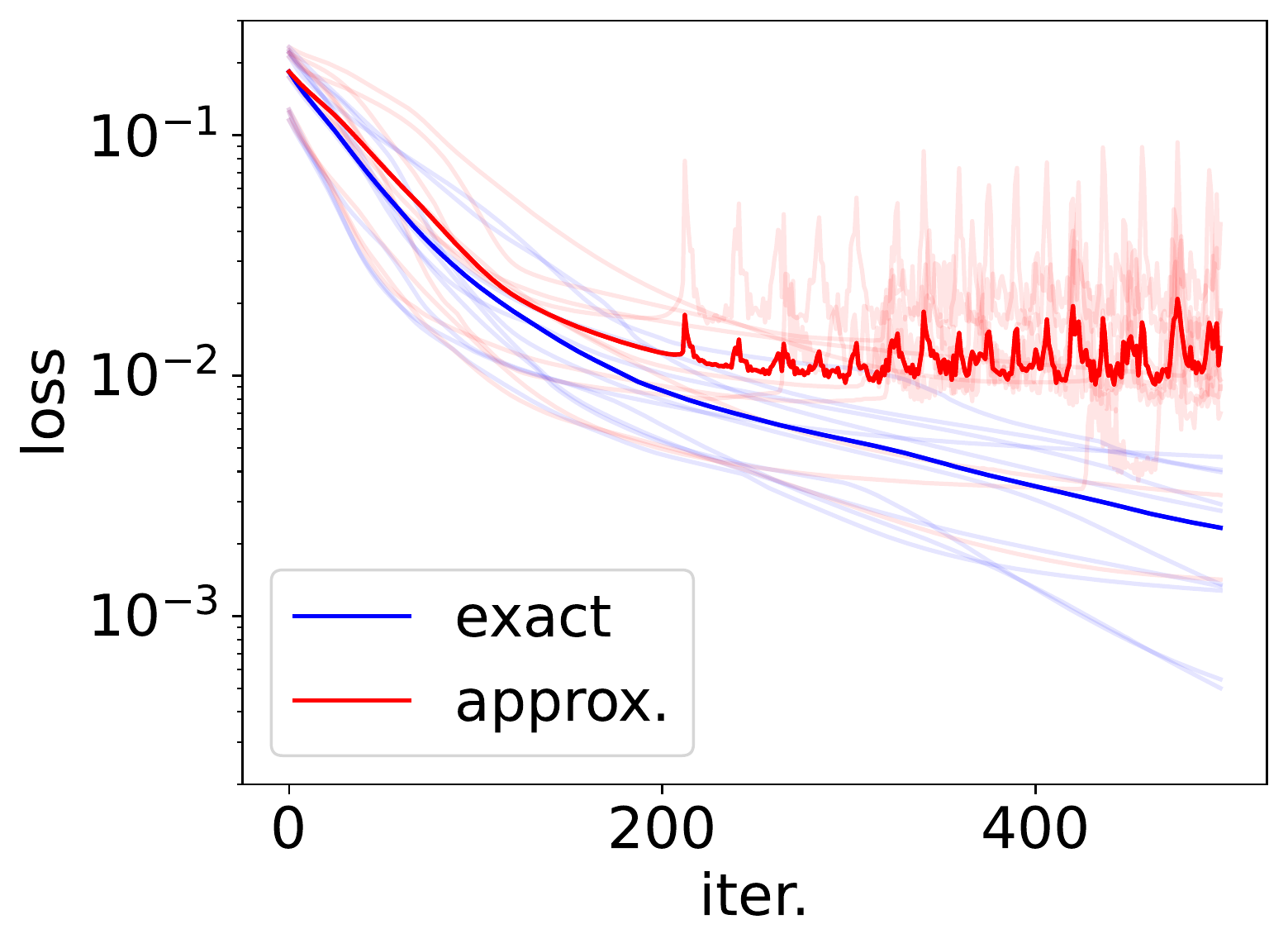} \\
		\includegraphics[width=0.24\textwidth]{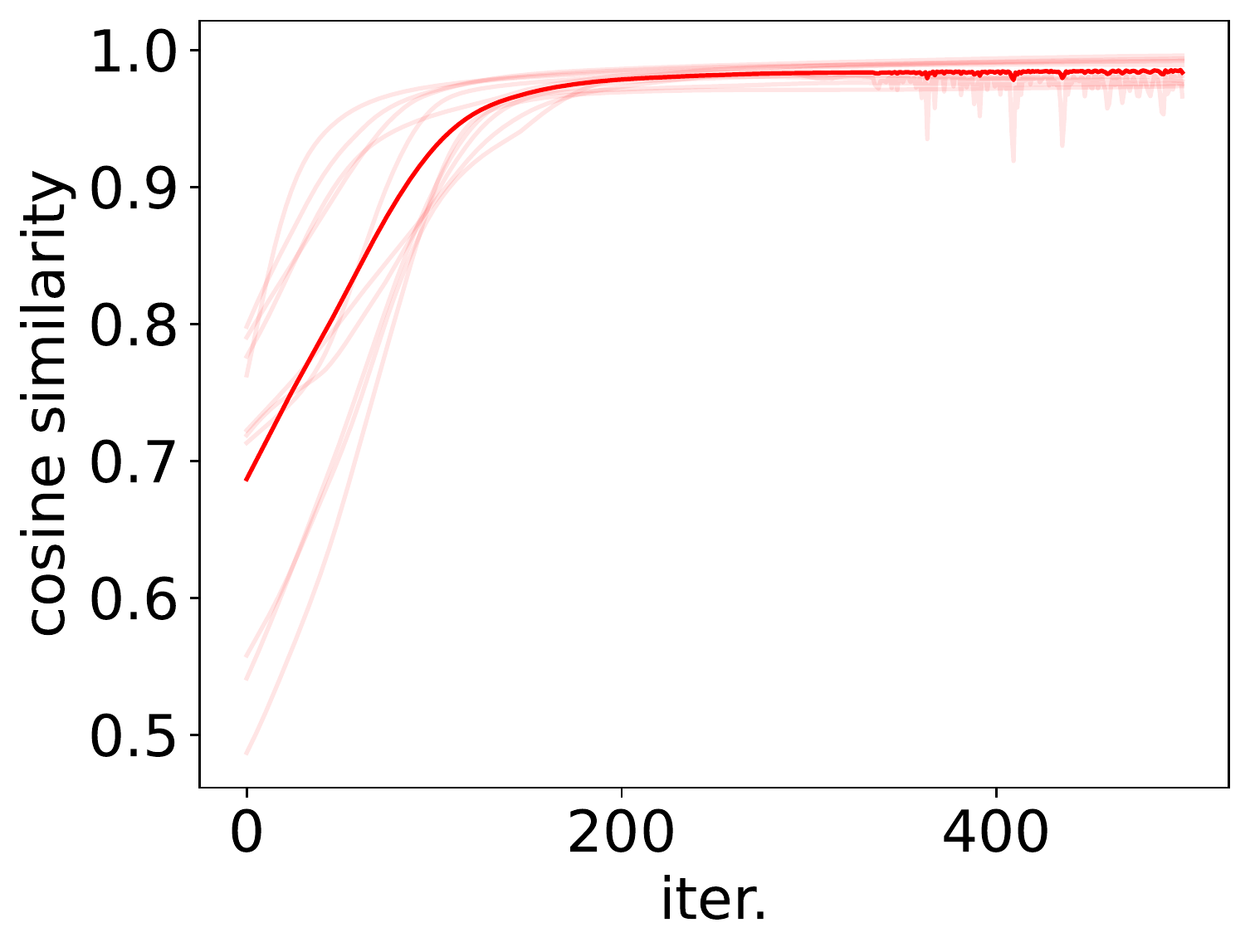} &
		\includegraphics[width=0.24\textwidth]{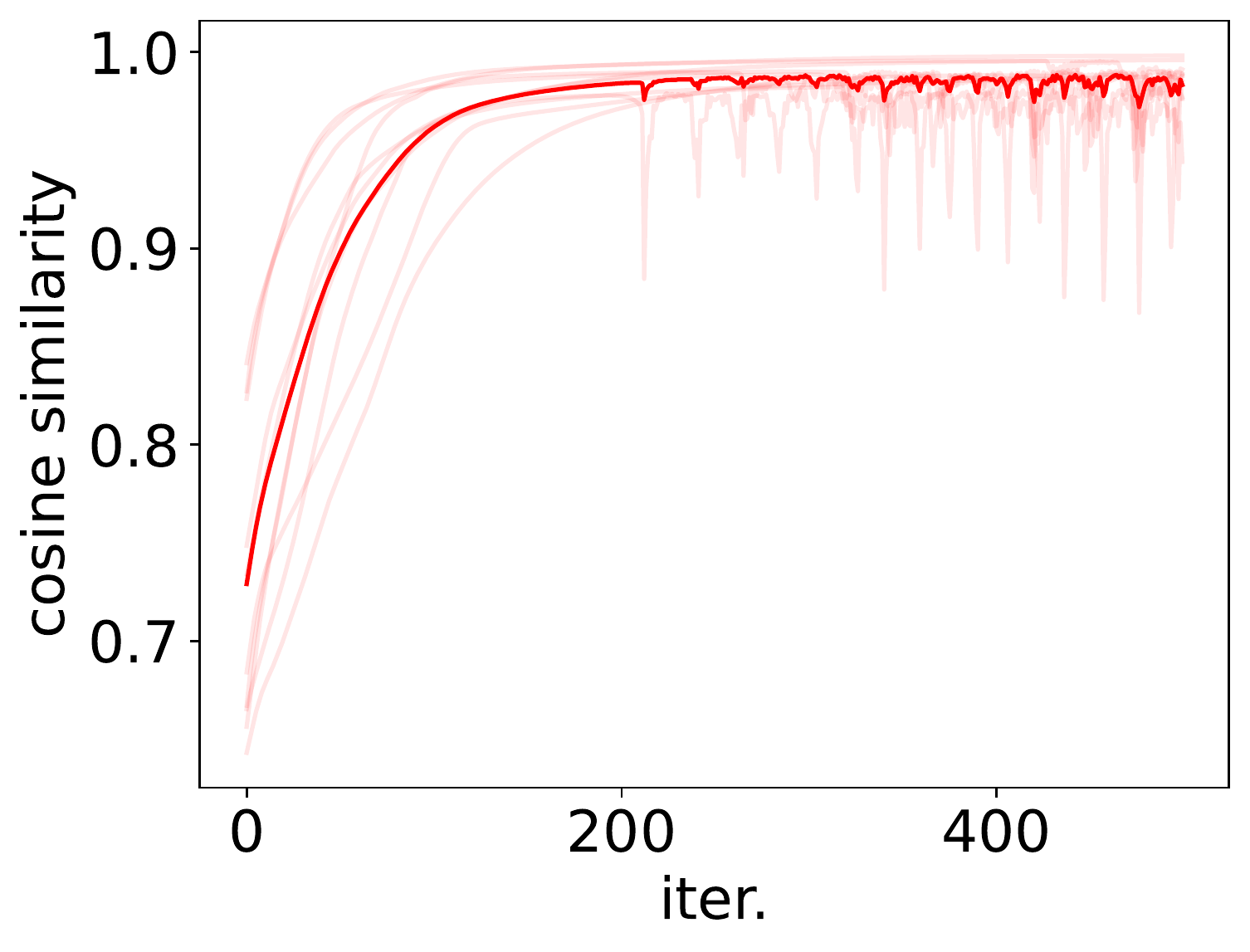} \\
		(a) under parameterized & (b) over parameterized
	\end{tabular}%
	\caption{Learning curves (top) for exact and approximate gradients for projection onto the unit sphere experiments. Bottom curves show cosine similarity between approximate and exact gradients for each point on the approximate learning curve. Left versus right shows low- versus high-dimensional $z_b$, respectively.}
	\label{fig:proj}
\end{figure}

\subsection{Optimal Transport}

Entropy regularized optimal transport is a linear equality constrained optimization problem~\cite{Cuturi:NIPS2013},
\begin{align}
	\begin{array}{ll}
		\text{minimize} & \langle P, M\rangle + \frac{1}{\gamma} \text{KL}(P \| rc\transpose) \\
		\text{subject to} & P 1 = r \text{ and } P\transpose 1 = c,
	\end{array}
	\label{eqn:op}
\end{align}
over variable $P \in \reals^{m \times n}_{+}$, where $M \in \reals^{m \times n}$ is an input cost matrix, $r$ and $c$ are positive vectors of row and column sum constraints (with $\ones\transpose r = \ones\transpose c$). Hyper-parameter $\gamma > 0$ controls the strength of the regularization term.

Typical learning curves and gradient similarity per iteration is shown in \figref{fig:ot}, depicting behavior much like the previous example---the approximate gradient is always a descent direction and works especially well during the early stages of training. This is consistent with our analysis.

\begin{figure*}
	\centering \small
	\setlength{\tabcolsep}{2pt}
	\begin{tabular}{cccc}
		\includegraphics[width=0.24\textwidth]{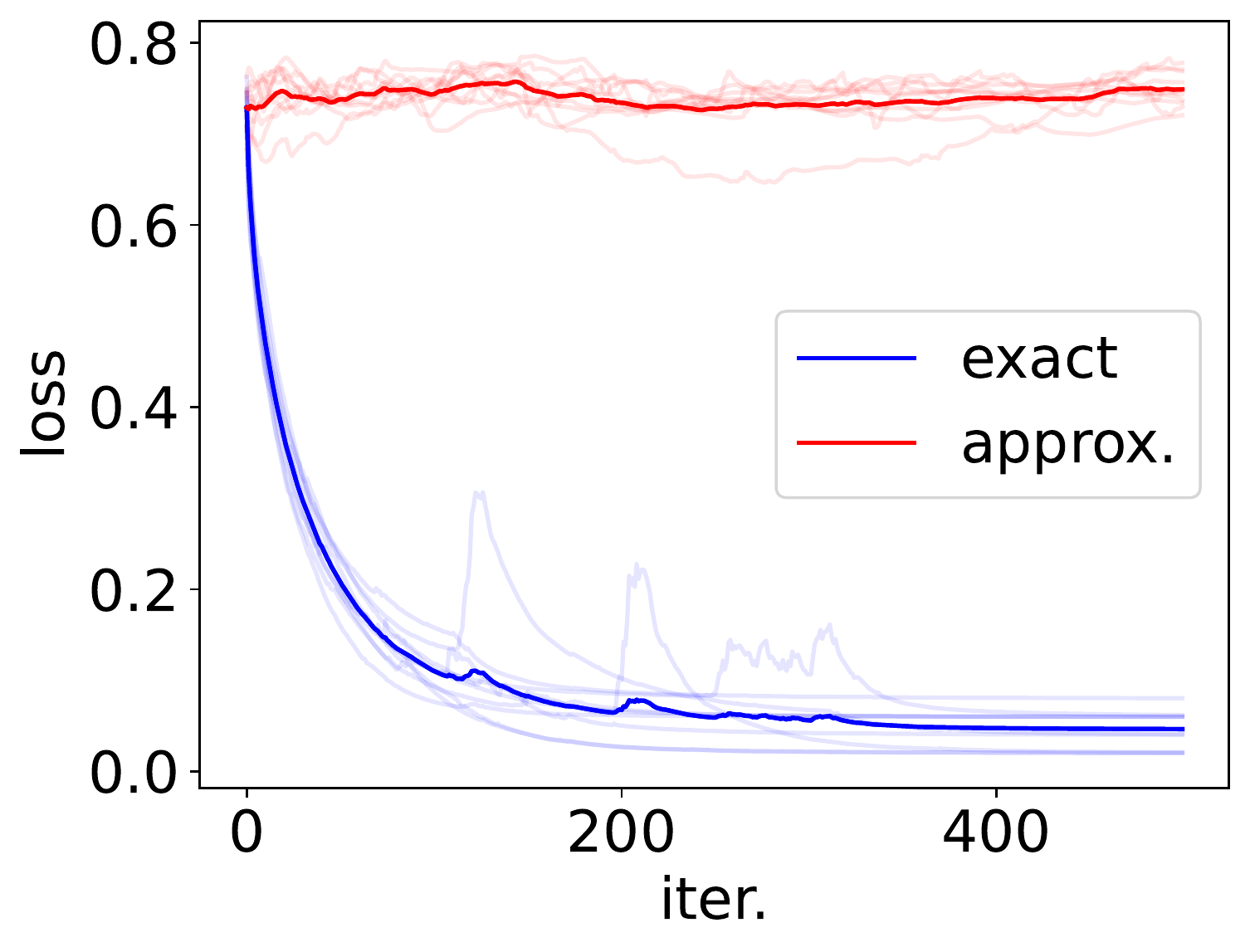} &
		\includegraphics[width=0.24\textwidth]{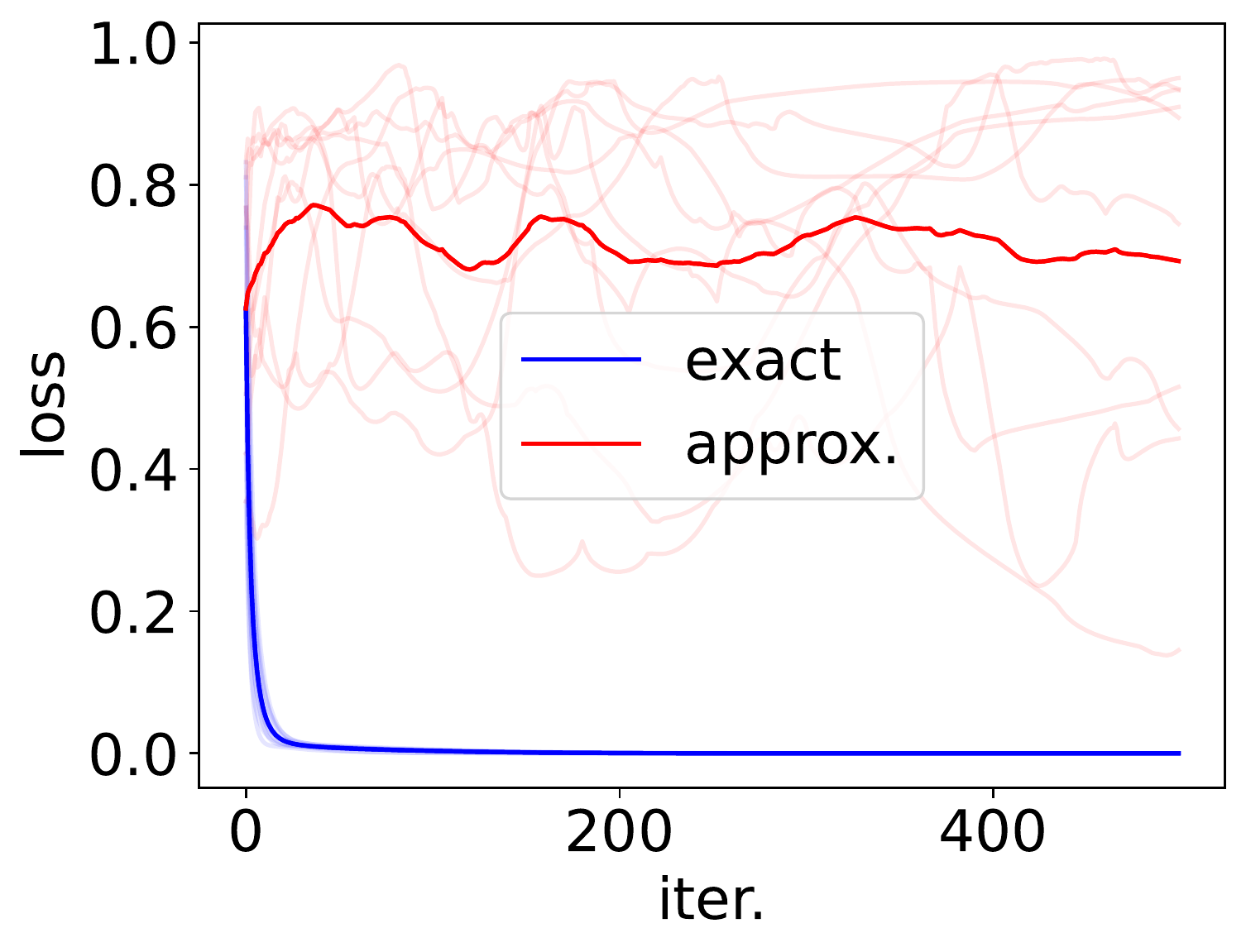} &
		\includegraphics[width=0.24\textwidth]{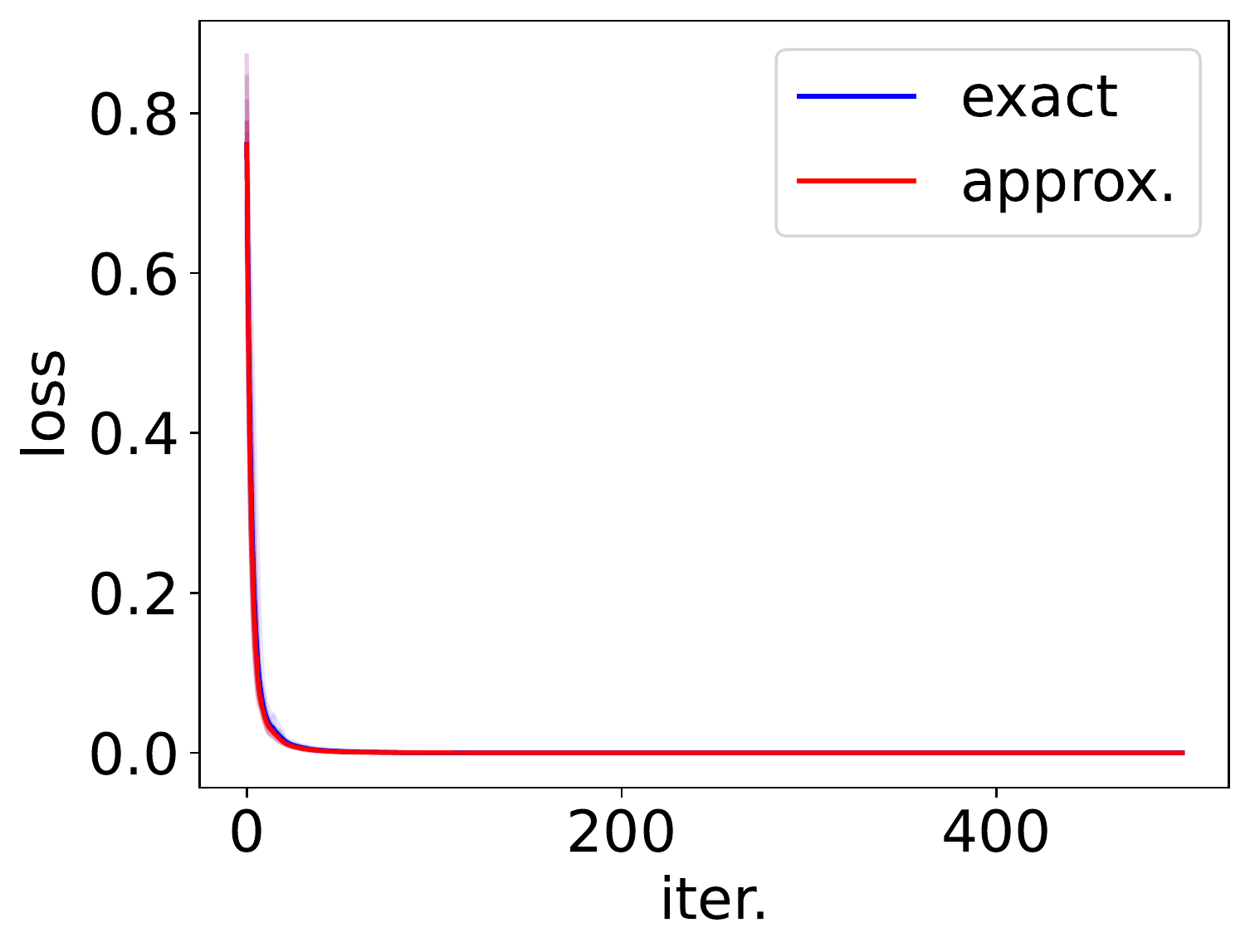} &
		\includegraphics[width=0.24\textwidth]{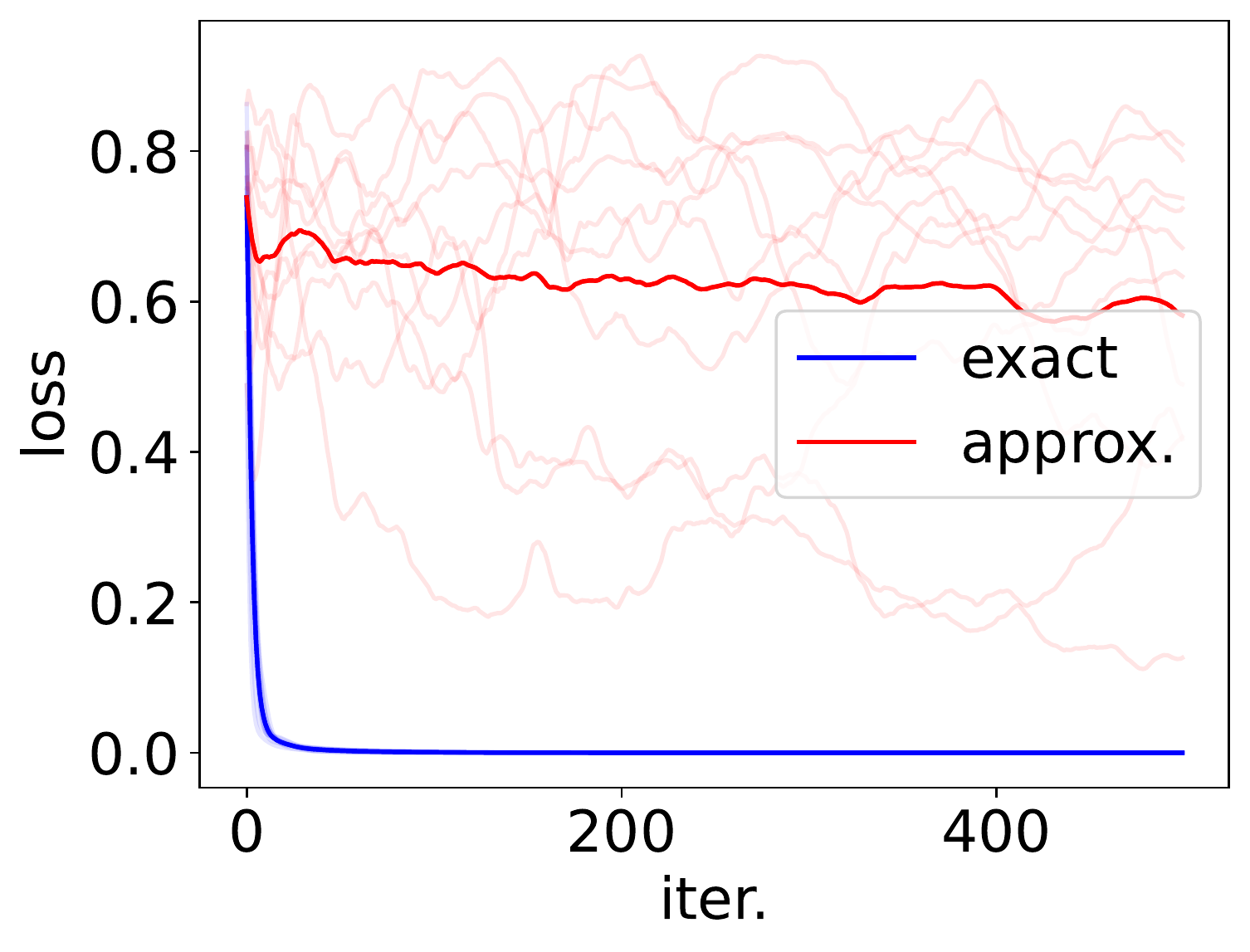}		
		\\
		\includegraphics[width=0.24\textwidth]{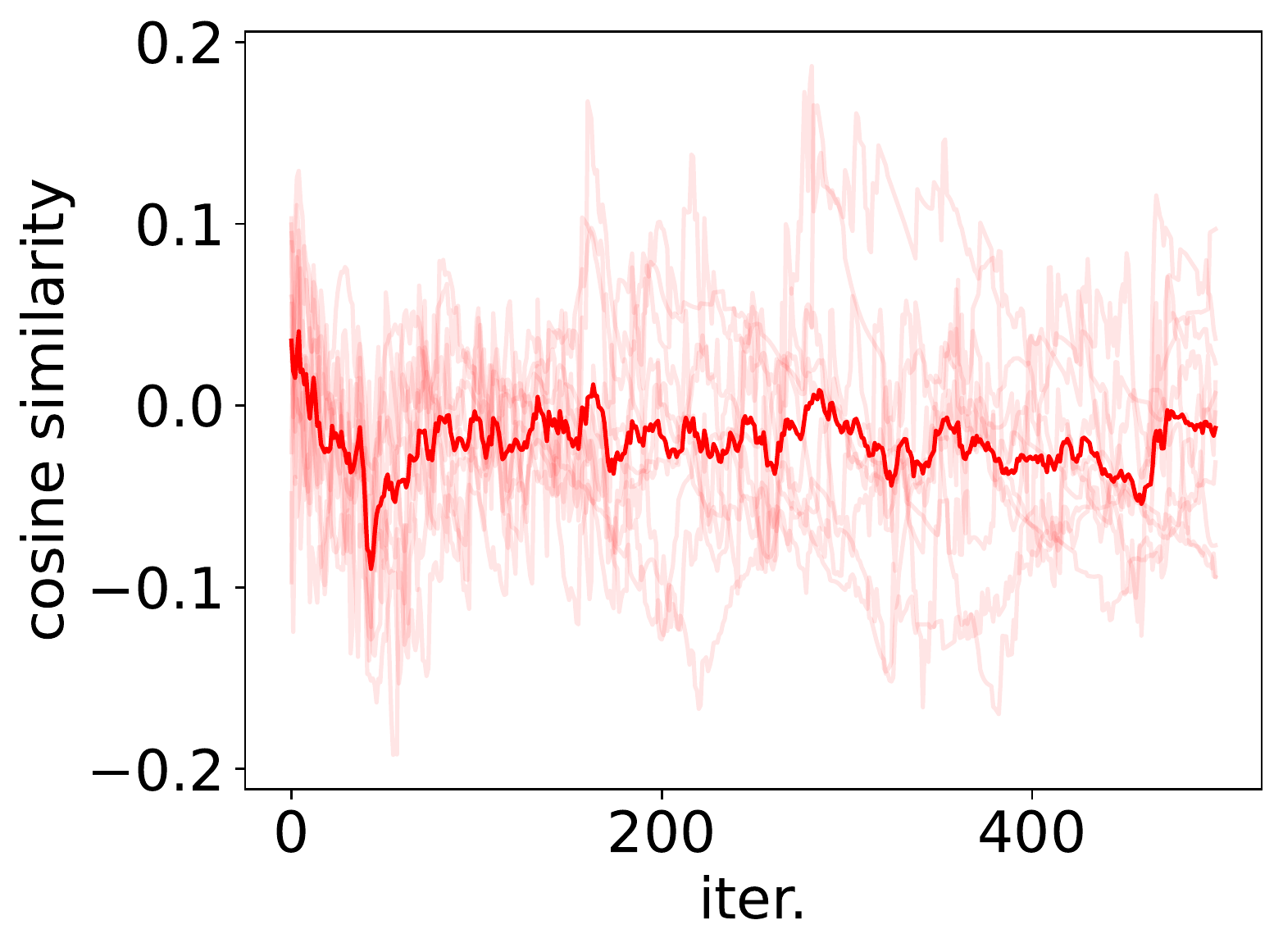} &
		\includegraphics[width=0.24\textwidth]{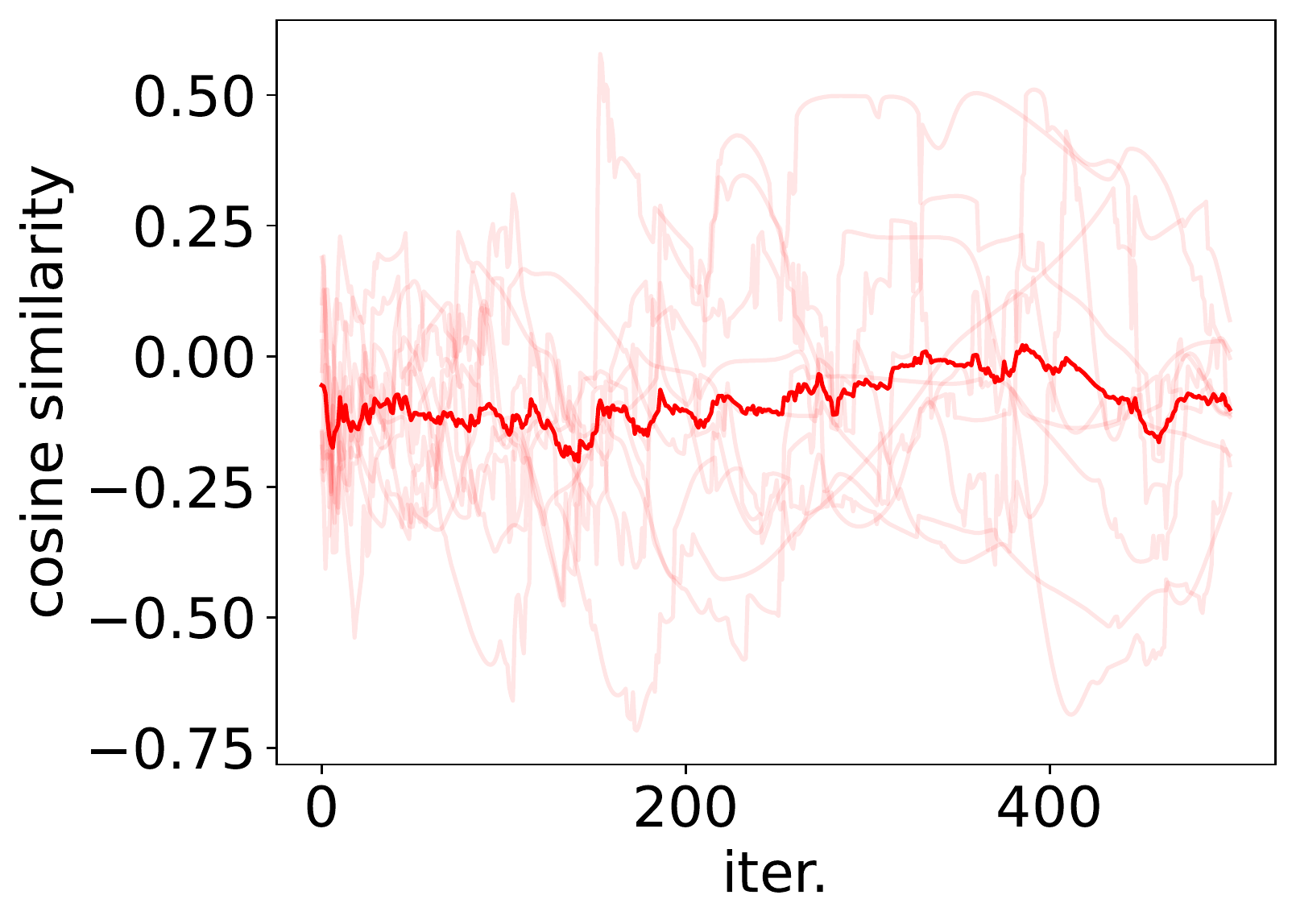} &
		\includegraphics[width=0.24\textwidth]{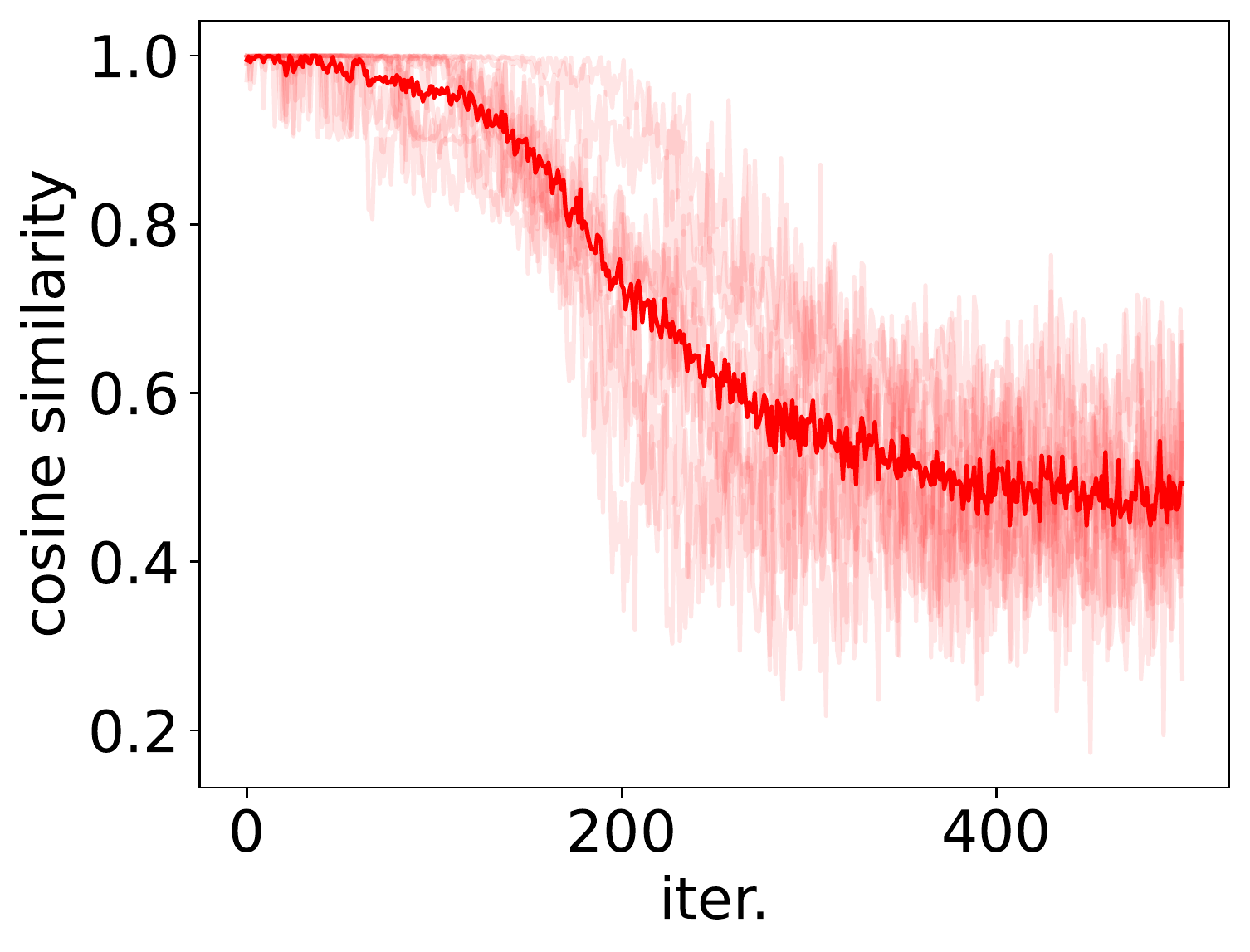} &
		\includegraphics[width=0.24\textwidth]{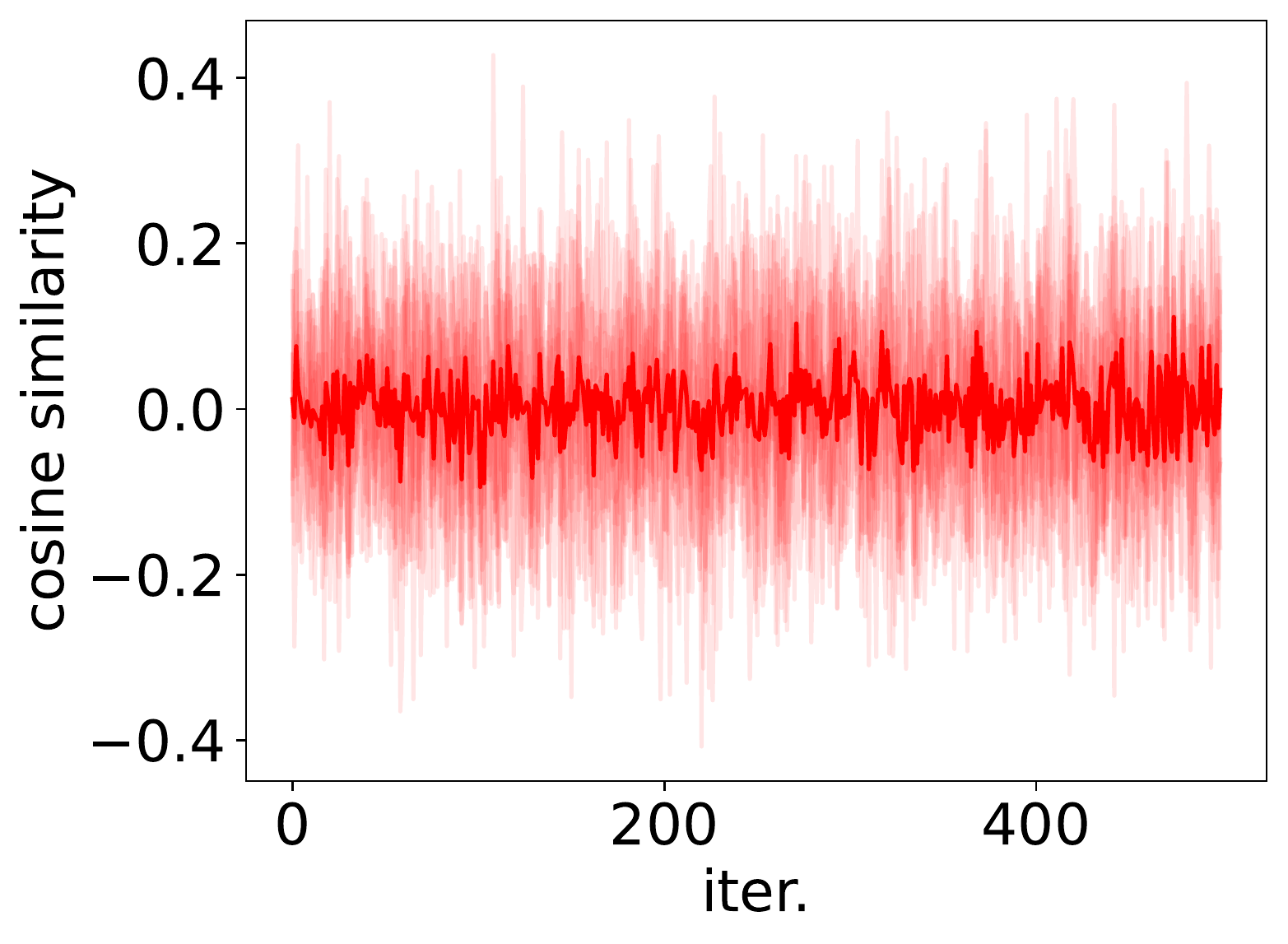}
		\\
		(a) all $\lambda$'s, general $X$ &
		(b) largest $\lambda$, general $X$ &
		(c) largest $\lambda$, $X \prec 0$ &
		(d) largest $\lambda$, rank-2 $X \succeq 0$
		\\
	\end{tabular}%
	\caption{Learning curves (top) and corresponding gradient cosine similarity (bottom) for eigen decomposition experiments. For (a) the loss is applied to all eigenvectors; for (b)--(d) it is only applied to the eigenvector corresponding to the largest eigenvalue.}
	\label{fig:ed}
\end{figure*}

\subsection{Eigen Decomposition}
\vspace{-1mm}

Given a real symmetric matrix $X = X\transpose \in \reals^{m \times m}$, the (unit) eigenvector associated with the largest eigenvalue of $X$ can be found by solving the following equality constrained optimization problem~\cite{Ghojogh:TR2019},
\begin{align}
	\begin{array}{ll}
		\text{maximize (over $u \in \reals^{m}$)} & u\transpose X u \\
		\text{subject to} & u\transpose u = 1.
	\end{array}
	\label{eqn:eigenprob}
\end{align}
Here we assume that the largest eigenvalue is simple otherwise a well-defined derivative does not exist. The optimality conditions for solution $y \in \reals^{m}$ are thus\footnote{Indeed, this holds for any simple eigenvalue-eigenvector pair.},
\begin{align}
	X y - \lambda_{\text{max}} y = 0_{m}
	\text{ and }
	y\transpose y = 1,
\end{align}
which gives differentials~\cite{Magnus:1985},
\begin{align}
	\text{d} y &= (\lambda_{\text{max}} I - X)^{\dagger} (\text{d}{X}) y
\end{align}
where ${}^{\dagger}$ denotes pseudo-inverse. 
So with respect to the $(i,j)$-th component of $X$, and using symmetry, we have
\begin{align}
	\dd[X_{ij}] y(X) &= -\frac{1}{2} (X - \lambda_{\text{max}} I)^{\dagger} (y_j e_i + y_i e_j).
\end{align}
Ignoring the equality constraint $u\transpose u = 1$ we arrive at
\begin{align}
	\widehat{\dd[X_{ij}] y}(X) &= -\frac{1}{2} X^{\dagger} (y_j e_i + y_i e_j).
\end{align}
There is no computational gain here unless we need derivatives for multiple different eigenvectors and hence require multiple pseudo-inverses $(X - \lambda_k I)^{\dagger}$ for the exact gradient.

Moreover, results shown in \figref{fig:ed} confirm our analysis that the approximation is a poor choice, and rarely a descent direction, unless $y$ corresponds to the max.\ eigenvalue and all other eigenvalues are negative (equiv., the min.\ eigenvalue and all other eigenvalues are positive), as in \figref{fig:ed}(c).

\begin{figure}[b!]
	\centering \small
	\setlength{\tabcolsep}{2pt}
	\begin{tabular}{cc}
		\includegraphics[width=0.24\textwidth]{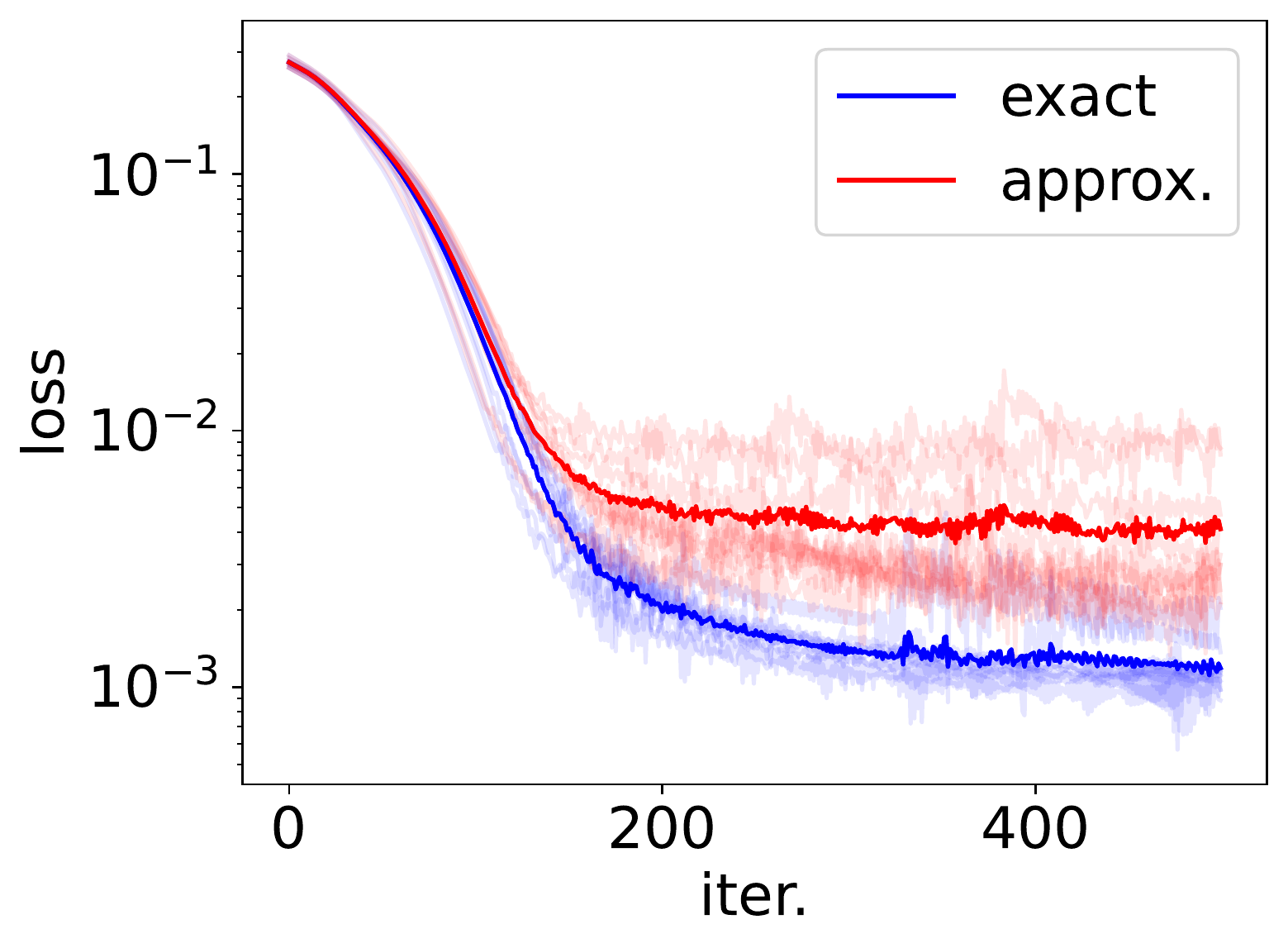} &
		\includegraphics[width=0.24\textwidth]{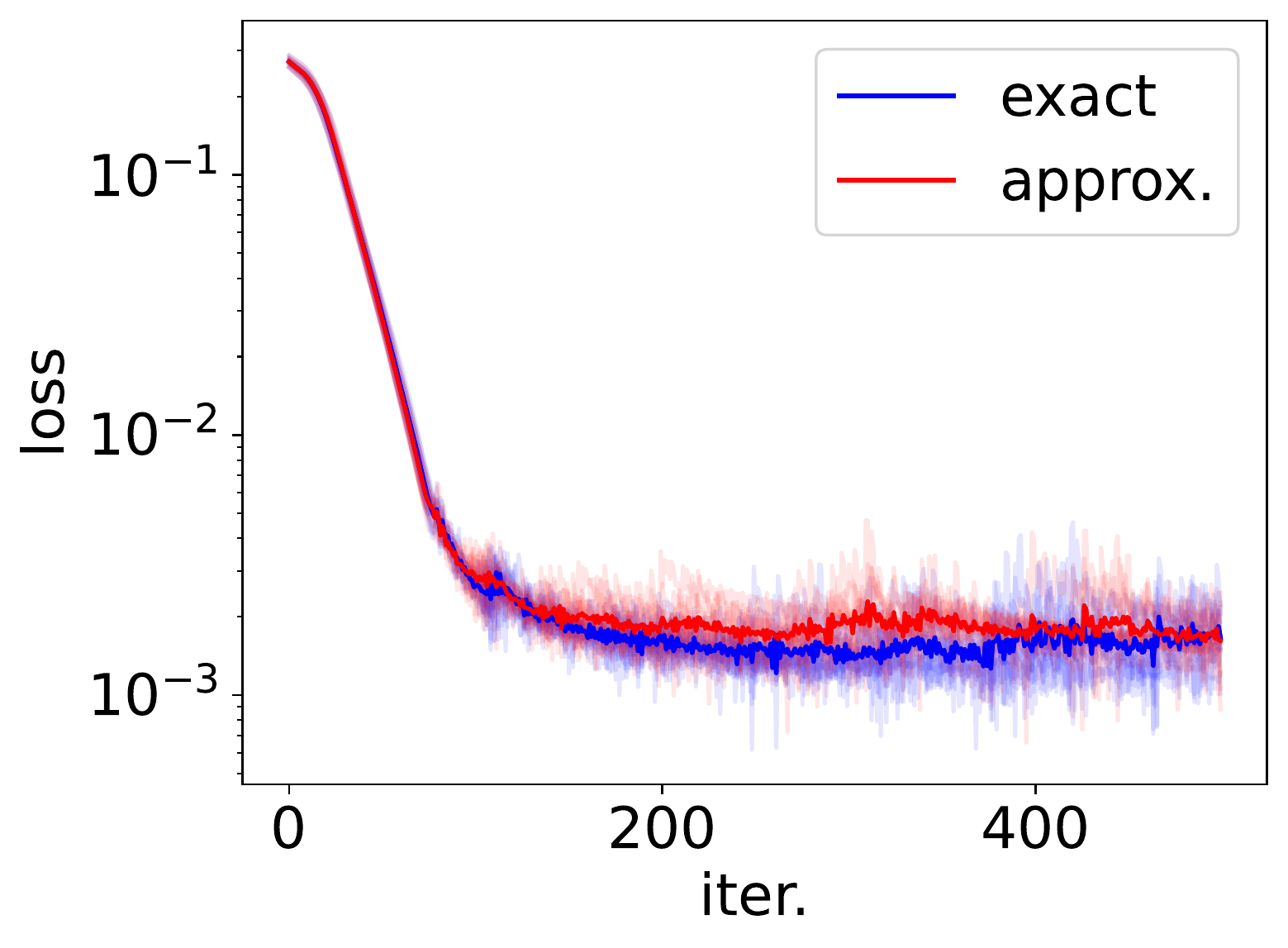} \\
		\includegraphics[width=0.24\textwidth]{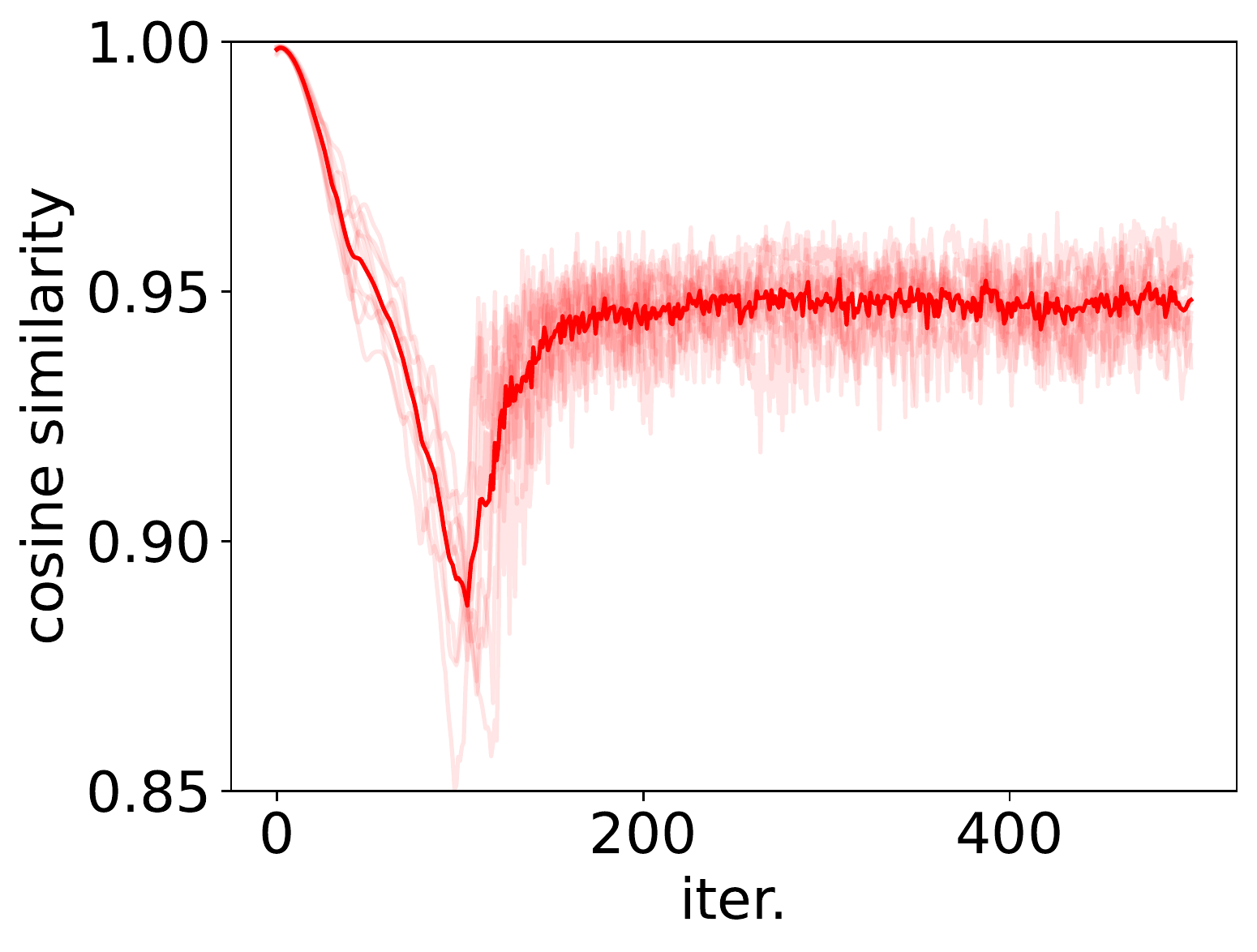} &
		\includegraphics[width=0.24\textwidth]{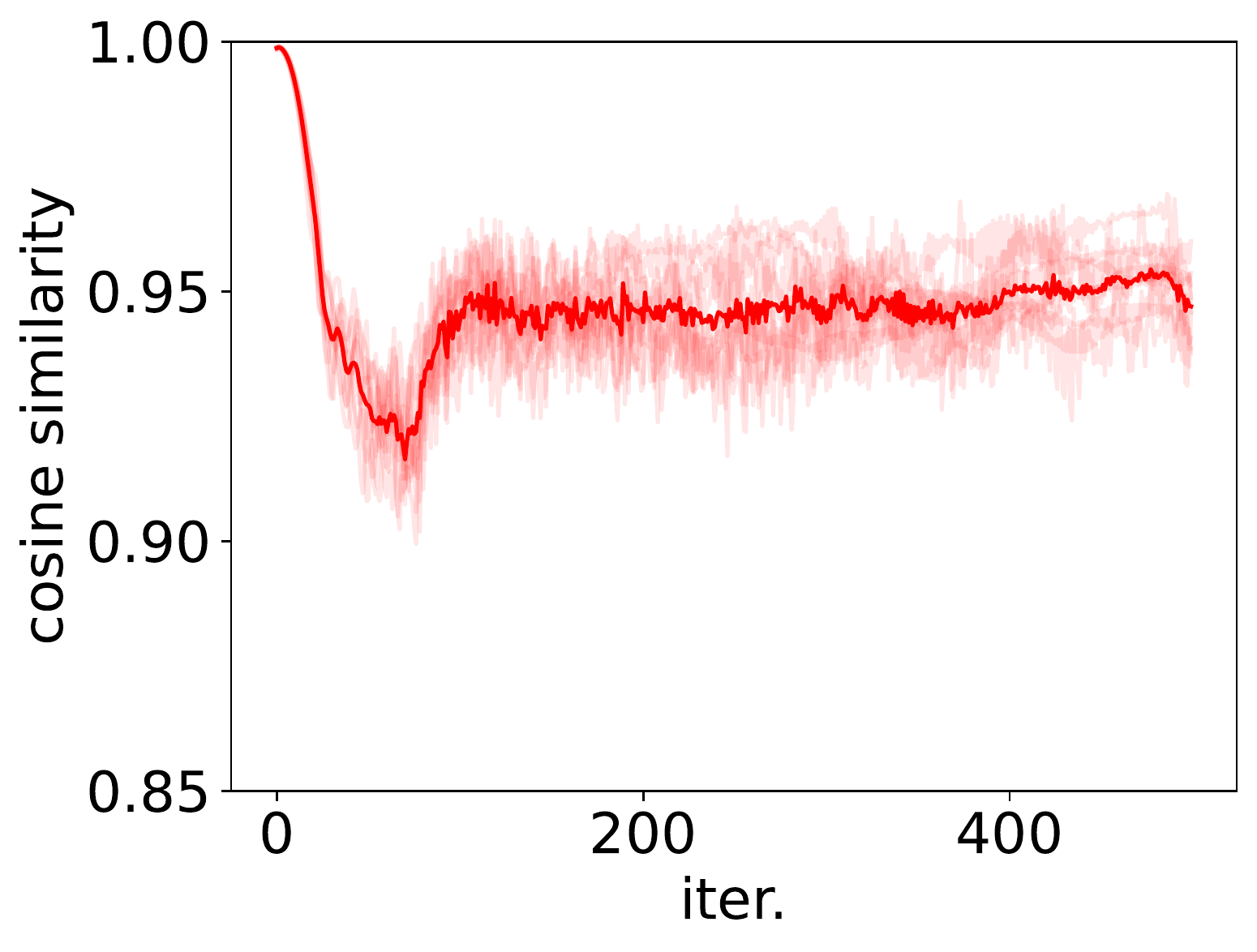} \\
		(a) under parameterized & (b) over parameterized
	\end{tabular}%
	\caption{Learning curves (top) and corresponding gradient cosine similarity (bottom) for optimal transport experiments.}
	\label{fig:ot}
\end{figure}


\vspace{-1mm}
\section{Discussion}
\label{sec:discussion}
\vspace{-1mm}

We have shown that (for certain objective functions) ignoring linear constraints gives a descent direction on average but that this does not always hold for normalization constraints. Experiments verify our analysis, and also show that even when we have a descent direction, the approximation tends to only work well in early stages of training. Whenever using approximations their behavior should be well-understood. This work is a step towards understanding of gradient approximations in differentiable optimization.


\newpage
\bibliography{long,papers}

\begin{thebibliography}{16}
\providecommand{\natexlab}[1]{#1}
\providecommand{\url}[1]{\texttt{#1}}
\expandafter\ifx\csname urlstyle\endcsname\relax
  \providecommand{\doi}[1]{doi: #1}\else
  \providecommand{\doi}{doi: \begingroup \urlstyle{rm}\Url}\fi

\bibitem[Agrawal et~al.(2019)Agrawal, Amos, Barratt, Boyd, Diamond, and
  Kolter]{Agrawal:NIPS19}
Agrawal, A., Amos, B., Barratt, S., Boyd, S., Diamond, S., and Kolter, Z.
\newblock Differentiable convex optimization layers.
\newblock In \emph{Advances in Neural Information Processing Systems
  ({NeurIPS})}, 2019.

\bibitem[Amos \& Kolter(2017)Amos and Kolter]{Amos:ICML2017}
Amos, B. and Kolter, J.~Z.
\newblock {O}pt{N}et: Differentiable optimization as a layer in neural
  networks.
\newblock In \emph{Proc.~of the International Conference on Machine Learning
  ({ICML})}, 2017.

\bibitem[Berthet et~al.(2020)Berthet, Blondel, Teboul, Cuturi, Vert, and
  Bach]{Berthet:NIPS20}
Berthet, Q., Blondel, M., Teboul, O., Cuturi, M., Vert, J.-P., and Bach, F.
\newblock Learning with differentiable pertubed optimizers.
\newblock In \emph{Advances in Neural Information Processing Systems
  ({NeurIPS})}, 2020.

\bibitem[Blondel et~al.(2020)Blondel, Teboul, Berthet, and
  Djolonga]{Blondel:ICML20}
Blondel, M., Teboul, O., Berthet, Q., and Djolonga, J.
\newblock Fast differentiable sorting and ranking.
\newblock In \emph{Proc.~of the International Conference on Machine Learning
  ({ICML})}, 2020.

\bibitem[Blondel et~al.(2022)Blondel, Berthet, Cuturi, Frostig, Hoyer,
  Llinares-Lopez, Pedregosa, and Vert]{Blondel:NIPS22}
Blondel, M., Berthet, Q., Cuturi, M., Frostig, R., Hoyer, S., Llinares-Lopez,
  F., Pedregosa, F., and Vert, J.-P.
\newblock Efficient and modular implicit differentiation.
\newblock In \emph{Advances in Neural Information Processing Systems
  ({NeurIPS})}, 2022.

\bibitem[Cuturi(2013)]{Cuturi:NIPS2013}
Cuturi, M.
\newblock Sinkhorn distances: Lightspeed computation of optimal transport.
\newblock In \emph{Advances in Neural Information Processing Systems
  ({NeurIPS})}, 2013.

\bibitem[Ghojogh et~al.(2019)Ghojogh, Karray, and Crowley]{Ghojogh:TR2019}
Ghojogh, B., Karray, F., and Crowley, M.
\newblock Eigenvalue and generalized eigenvalue problems: Tutorial.
\newblock Technical report, 2019.

\bibitem[Gould et~al.(2016)Gould, Fernando, Cherian, Anderson, {Santa Cruz},
  and Guo]{Gould:TR2016}
Gould, S., Fernando, B., Cherian, A., Anderson, P., {Santa Cruz}, R., and Guo,
  E.
\newblock On differentiating parameterized argmin and argmax problems with
  application to bi-level optimization.
\newblock Technical report, Australian National University (arXiv:1607.05447),
  July 2016.

\bibitem[Gould et~al.(2021)Gould, Hartley, and Campbell]{Gould:PAMI2021}
Gould, S., Hartley, R., and Campbell, D.
\newblock Deep declarative networks.
\newblock \emph{{IEEE} Transactions on Pattern Analysis and Machine
  Intelligence ({PAMI})}, 2021.

\bibitem[Gould et~al.(2022)Gould, Campbell, Ben-Shabat, Koneputugodage, and
  Xu]{Gould:OTSDM2022}
Gould, S., Campbell, D., Ben-Shabat, Y., Koneputugodage, C.~H., and Xu, Z.
\newblock Exploiting problem structure in deep declarative networks: Two case
  studies.
\newblock In \emph{First AAAI Workshop on Optimal Transport and Structured Data
  Modeling ({OT-SDM})}, 2022.

\bibitem[Loshchilov \& Hutter(2019)Loshchilov and Hutter]{Loshchilov:ICLR19}
Loshchilov, I. and Hutter, F.
\newblock Decoupled weight decay regularization.
\newblock In \emph{Proc.~of the International Conference on Learning
  Representations ({ICLR})}, 2019.

\bibitem[Magnus(1985)]{Magnus:1985}
Magnus, J.~R.
\newblock On differentiating eigenvalues and eigenvectors.
\newblock \emph{Econometric Theory}, pp.\  179--191, 1985.

\bibitem[Paszke et~al.(2017)Paszke, Gross, Chintala, Chanan, Yang, DeVito, Lin,
  Desmaison, Antiga, and Lerer]{PyTorch}
Paszke, A., Gross, S., Chintala, S., Chanan, G., Yang, E., DeVito, Z., Lin, Z.,
  Desmaison, A., Antiga, L., and Lerer, A.
\newblock Automatic differentiation in {PyTorch}.
\newblock In \emph{NeurIPS Autodiff Workshop}, 2017.

\bibitem[Petersen et~al.(2022)Petersen, Borgelt, Kuehne, and
  Deussen]{Peterson:NIPS22}
Petersen, F., Borgelt, C., Kuehne, H., and Deussen, O.
\newblock Deep differentiable logic gate networks.
\newblock In \emph{Advances in Neural Information Processing Systems
  ({NeurIPS})}, 2022.

\bibitem[Seber \& Lee(2003)Seber and Lee]{Seber:2003}
Seber, G. A.~F. and Lee, A.~J.
\newblock \emph{Linear regression analysis}.
\newblock Wiley-Interscience, 2nd edition, 2003.

\bibitem[Vlastelica et~al.(2020)Vlastelica, Paulus, Musil, Martius, and
  Rolínek]{Vlastelica:ICLR20}
Vlastelica, M., Paulus, A., Musil, V., Martius, G., and Rolínek, M.
\newblock Differentiation of blackbox combinatorial solvers.
\newblock In \emph{Proc.~of the International Conference on Learning
  Representations ({ICLR})}, 2020.

\end{thebibliography}
\bibliographystyle{icml2023}

\newpage
\appendix
\onecolumn
\section{Proofs and Derivations}

\subsection{Derivation of Equation~\ref{eqn:lindescentcond}}
\label{sec:lindescentcond}

Consider the function
$f(w) = w\transpose \left(I - \frac{a a\transpose H^{-1}}{a\transpose H^{-1}a} \right) w$.
If $w = 0$, then $f(w) = 0$. Otherwise,
\begin{align}
	f(w) \geq 0 &\iff \frac{1}{\|w\|_2^2} f(w) \geq 0.
\end{align}
But
\begin{align}
	\frac{1}{\|w\|_2^2} f(w)
	&= \frac{1}{\|w\|_2^2} w\transpose \left(I - \frac{a a\transpose H^{-1}}{a\transpose H^{-1}a} \right) w
	= 1 - \frac{w\transpose}{\|w\|_2} \left(\frac{a a\transpose H^{-1}}{a\transpose H^{-1}a} \right) \frac{w}{\|w\|_2}
\end{align}
Therefore,
\begin{align}
	\min_{w} w\transpose \left(I - \frac{a a\transpose H^{-1}}{a\transpose H^{-1}a} \right) w \geq 0
	&\iff
	\max_{\|w\| = 1} \, w\transpose \left( \frac{a a\transpose H^{-1}}{a\transpose H^{-1}a} \right) w \leq 1.
\end{align}

\subsection{Proof of Proposition~\ref{prop:badnews}}
\label{sec:proof_badnews}

We begin with three useful lemmas for rank-1 quadratic forms, $f(x) = x\transpose \! \left(\frac{ab\transpose}{a\transpose b} \right) x = x\transpose \! \left(\frac{ab\transpose + ba\transpose}{2 a\transpose b} \right) x$ with $a \transpose b \neq 0$.

\begin{lemma}
	Let $M = ab\transpose + ba\transpose$. Then $M$ has eigenvalues $\lambda_{1,2} = a\transpose b \pm \|a\|\|b\|$ with corresponding orthonormal eigenvectors $q_{1,2} \propto \|b\| a \pm \|a\| b$.
	\label{lem:quad_eigen}
\end{lemma}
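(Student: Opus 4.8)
The plan is to work directly with the symmetric rank-2 matrix $M = ab\transpose + ba\transpose$ and observe that its range is contained in $\operatorname{span}\{a, b\}$, so every eigenvector with nonzero eigenvalue must lie in this two-dimensional subspace (assuming $a$ and $b$ are linearly independent; the collinear case is handled separately as a degenerate sub-case). Hence I would look for eigenvectors of the form $q = \alpha a + \beta b$ and reduce the eigenvalue problem to a $2 \times 2$ computation.

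First I would compute $Mq = (ab\transpose + ba\transpose)(\alpha a + \beta b) = a(\alpha\, b\transpose a + \beta\, b\transpose b) + b(\alpha\, a\transpose a + \beta\, a\transpose b)$. Writing $s = a\transpose b$, $\|a\|^2$, $\|b\|^2$ for the Gram entries, this gives the matrix pencil
\begin{align*}
\begin{bmatrix} s & \|b\|^2 \\ \|a\|^2 & s \end{bmatrix} \begin{bmatrix} \alpha \\ \beta \end{bmatrix},
\end{align*}
whose eigenvalues are $s \pm \sqrt{\|a\|^2 \|b\|^2} = a\transpose b \pm \|a\| \|b\|$, exactly as claimed. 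The corresponding null-space vectors of this $2\times 2$ system give $(\alpha, \beta) \propto (\|b\|, \pm\|a\|)$, so $q_{1,2} \propto \|b\| a \pm \|a\| b$. I would then verify orthogonality of $q_1$ and $q_2$ by a direct inner-product calculation: $(\|b\| a + \|a\| b)\transpose(\|b\| a - \|a\| b) = \|b\|^2\|a\|^2 - \|a\|^2\|b\|^2 = 0$, and normalize. The remaining $m-2$ eigenvalues are zero, coming from the orthogonal complement of $\operatorname{span}\{a,b\}$, which $M$ annihilates.

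I do not expect a serious obstacle here; the main things to be careful about are (i) noting that the lemma implicitly assumes $a, b$ linearly independent and $\|a\|, \|b\| \neq 0$ so that the normalization makes sense and the two eigenvalues are genuinely distinct, and (ii) signs in the square root. A quick alternative/sanity check is to verify the trace and Frobenius norm: $\lambda_1 + \lambda_2 = 2a\transpose b = \operatorname{tr}(M)$ and $\lambda_1^2 + \lambda_2^2 = 2(a\transpose b)^2 + 2\|a\|^2\|b\|^2 = \|M\|_F^2$, both of which match.
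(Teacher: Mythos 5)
Your proposal is correct, but it takes a genuinely different route from the paper. The paper proves the lemma by direct verification: it substitutes the claimed eigenvector $q_1 = \|b\|a + \|a\|b$ into $Mq_1$, expands, and confirms $Mq_1 = (a\transpose b + \|a\|\|b\|)q_1$ (and similarly for $q_2$), then checks $q_1\transpose q_2 = 0$ by expanding the inner product. You instead \emph{derive} the eigenpairs: you note that $\mathrm{range}(M) \subseteq \mathrm{span}\{a,b\}$, represent the restriction of $M$ to that invariant subspace in the basis $\{a,b\}$ as the $2\times 2$ matrix $\left[\begin{smallmatrix} s & \|b\|^2 \\ \|a\|^2 & s\end{smallmatrix}\right]$ with $s = a\transpose b$, and read off eigenvalues $s \pm \|a\|\|b\|$ and coefficient vectors $(\|b\|, \pm\|a\|)$ from the characteristic polynomial and null spaces. (Minor terminological quibble: this is a matrix representation in a non-orthogonal basis, not a matrix pencil, but the computation is valid since eigenvalues on an invariant subspace are basis-independent.) The paper's verification is shorter given that the answer is already stated in the lemma; your derivation explains where the eigenpairs come from, makes explicit why the remaining $m-2$ eigenvalues vanish (the orthogonal complement of $\mathrm{span}\{a,b\}$ is the kernel), and your trace and Frobenius-norm sanity checks are both correct. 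Your caveat about the collinear case is also well taken --- the paper defers that degeneracy to the remark following Lemma~\ref{lem:quad_eigen_spectrum}, where the rank-1 case is noted to have a single non-zero eigenvalue.
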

\begin{proof}
	By direct substitution,
	\begin{align}
		M q_1 &= (ab\transpose + ba\transpose) (\|b\| a + \|a\| b) \\
		&= ab\transpose (\|b\| a + \|a\| b) + ba\transpose (\|b\| a + \|a\| b) \\
		&= (b\transpose a + \|a\|\|b\|) \|b\| a + (\|b\|\|a\| + a\transpose b) \|a\| b \\
		&= (a\transpose b + \|a\|\|b\|) (\|b\| a + \|a\| b) \\
		&= \lambda_1 q_1
	\end{align}
	and similarly for $\lambda_2$ and $q_2$. We can verify orthogonality of $q_1$ and $q_2$ as
	\begin{align}
		q_1\transpose q_2 &= (\|b\|a + \|a\|b)\transpose (\|b\|a - \|a\|b) \\
		&= \|b\|^2\|a\|^2 - \|b\|\|a\|a\transpose b + \|a\|\|b\|b\transpose a - \|a\|^2\|b\|^2 \\
		&=0
	\end{align}
\end{proof}

\begin{lemma}
	The eigenvalue spectrum of $M = \frac{1}{2} \! \left(\frac{ab\transpose + ba\transpose}{a\transpose b}\right)$ with $a\transpose b \neq 0$ is
	\begin{align*}
		\sigma\left(\frac{ab\transpose + ba\transpose}{2 a\transpose b}\right)
		&= \left\{ \frac{1}{2} - \frac{\|a\|\|b\|}{2 |a\transpose b|},\, 0,\, \ldots,\, 0,\, \frac{1}{2} + \frac{\|a\|\|b\|}{2 |a\transpose b|} \right\}
	\end{align*}
	\label{lem:quad_eigen_spectrum}
\end{lemma}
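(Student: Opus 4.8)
The plan is to assemble the spectrum directly from Lemma~\ref{lem:quad_eigen}. First I would note that $M = \frac{ab\transpose + ba\transpose}{2a\transpose b}$ is symmetric, since $(ab\transpose + ba\transpose)\transpose = ba\transpose + ab\transpose$, and that it is obtained from $N := ab\transpose + ba\transpose$ by the scalar rescaling $M = \frac{1}{2a\transpose b}\, N$, which is well defined because $a\transpose b \neq 0$. Scalar multiplication multiplies every eigenvalue by the same constant and leaves eigenvectors unchanged, so $\sigma(M) = \frac{1}{2a\transpose b}\,\sigma(N)$, and it suffices to know $\sigma(N)$.

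Next I would invoke Lemma~\ref{lem:quad_eigen}, which supplies the two non-trivial eigenvalues $a\transpose b + \|a\|\|b\|$ and $a\transpose b - \|a\|\|b\|$ of $N$ together with their orthonormal eigenvectors. The remaining $m-2$ eigenvalues of $N$ are zero because $\operatorname{range}(N) \subseteq \operatorname{span}\{a,b\}$ has dimension at most two; equivalently, any vector orthogonal to both $a$ and $b$ lies in $\ker N$. Dividing through by $2a\transpose b$ turns the two non-trivial eigenvalues into $\tfrac12 + \tfrac{\|a\|\|b\|}{2a\transpose b}$ and $\tfrac12 - \tfrac{\|a\|\|b\|}{2a\transpose b}$ and leaves the zeros unchanged.

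The only subtlety is the absolute value appearing in the claimed spectrum. If $a\transpose b > 0$ then $a\transpose b = |a\transpose b|$ and the two values are literally $\tfrac12 \pm \tfrac{\|a\|\|b\|}{2|a\transpose b|}$; if $a\transpose b < 0$ then $a\transpose b = -|a\transpose b|$, so the same two numbers reappear but with the roles of ``larger'' and ``smaller'' interchanged. In both cases the (ordered) set of eigenvalues is exactly $\big\{\tfrac12 - \tfrac{\|a\|\|b\|}{2|a\transpose b|},\, 0,\ldots,0,\, \tfrac12 + \tfrac{\|a\|\|b\|}{2|a\transpose b|}\big\}$, as stated. I would also remark in passing that when $a$ and $b$ are parallel, Cauchy--Schwarz equality $\|a\|\|b\| = |a\transpose b|$ collapses the two non-trivial values to $0$ and $1$, matching the fact that $M$ is then the rank-one projector $aa\transpose/\|a\|^2$, so no separate case analysis is needed. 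There is no genuine obstacle here: the statement is essentially a corollary of Lemma~\ref{lem:quad_eigen} plus a rank count, and the only thing requiring care is the sign bookkeeping just described.
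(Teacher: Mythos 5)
Your proof is correct and follows essentially the same route as the paper's: the two nonzero eigenvalues come from \lemref{lem:quad_eigen} after rescaling by $\frac{1}{2a\transpose b}$, the remaining eigenvalues vanish because $\operatorname{rank}(ab\transpose + ba\transpose) \leq 2$, and the sign of $a\transpose b$ merely swaps which of the two values is the larger. Your explicit sign bookkeeping and the remark on the linearly dependent case simply spell out what the paper's proof compresses into ``taking careful note of the sign of $a\transpose b$.''
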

\begin{proof}
	Follows from \lemref{lem:quad_eigen} taking careful note of the sign of $a\transpose b$. To see that all other eigenvalues are zero, note that $M$ is a rank-2 matrix (rank-1 if $a$ and $b$ are linearly dependent) and so has at most two non-zero eigenvalues.
\end{proof}

It follows also that if $a$ and $b$ are linearly dependent then $\frac{1}{2} \! \left(\frac{ab\transpose + ba\transpose}{a\transpose b}\right)$ has a single non-zero eigenvalue of 1. Moreover, for any non-orthogonal $a$ and $b$, the sum of eigenvalues is equal to one.\\

\begin{lemma}
	Let $a, b \in \reals^n$ with $a\transpose b \neq 0$. Then
	\begin{align*}
		\max_{\|x\| = 1} \, x\transpose \left(\frac{ab\transpose}{a\transpose b} \right) x
		&= \frac{1}{2} + \frac{\|a\|\|b\|}{2 |a\transpose b|}.
	\end{align*}
	\label{lem:quad_max}
\end{lemma}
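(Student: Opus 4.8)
The plan is to reduce the maximization of the quadratic form $x\transpose\!\left(\frac{ab\transpose}{a\transpose b}\right)x$ to an eigenvalue problem for a symmetric matrix and then invoke the preceding lemmas. First I would observe that for any vector $x$, the scalar $x\transpose\!\left(\frac{ab\transpose}{a\transpose b}\right)x$ equals its own transpose, so it coincides with the symmetrized form $x\transpose\!\left(\frac{ab\transpose + ba\transpose}{2a\transpose b}\right)x = x\transpose M x$ where $M$ is the symmetric matrix from \lemref{lem:quad_eigen_spectrum}. This is exactly the rewriting already flagged in the text preceding the lemmas, so it is essentially a one-line step.

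Next I would apply the Rayleigh quotient characterization: for a symmetric matrix $M$, $\max_{\|x\|=1} x\transpose M x = \lambda_{\max}(M)$, the largest eigenvalue. By \lemref{lem:quad_eigen_spectrum}, the spectrum of $M$ is $\left\{\frac{1}{2} - \frac{\|a\|\|b\|}{2|a\transpose b|},\, 0,\, \ldots,\, 0,\, \frac{1}{2} + \frac{\|a\|\|b\|}{2|a\transpose b|}\right\}$. Since $\|a\|\|b\| \geq |a\transpose b|$ by Cauchy--Schwarz, the quantity $\frac{\|a\|\|b\|}{2|a\transpose b|} \geq \frac12 > 0$, so the largest of the three listed values is unambiguously $\frac{1}{2} + \frac{\|a\|\|b\|}{2|a\transpose b|}$ (it exceeds both $0$ and the other nonzero eigenvalue, which is $\leq 0$). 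Hence $\max_{\|x\|=1} x\transpose M x = \frac{1}{2} + \frac{\|a\|\|b\|}{2|a\transpose b|}$, which is the claimed identity.

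I do not anticipate a genuine obstacle here, since all the real work is packaged into \lemref{lem:quad_eigen} and \lemref{lem:quad_eigen_spectrum}. The only point requiring a little care is confirming that the supremum is attained and is a maximum rather than just a bound — this is immediate from the Rayleigh principle because the unit sphere is compact and $x \mapsto x\transpose M x$ is continuous, with the maximizer being the unit eigenvector $q_1 \propto \|b\|a + \|a\|b$ from \lemref{lem:quad_eigen} (after normalizing to sign so that the associated eigenvalue is the positive one when $a\transpose b < 0$). If one wanted to avoid quoting the Rayleigh characterization, an alternative is to expand $x$ in the orthonormal eigenbasis $\{q_1, q_2, \ldots\}$, write $x\transpose M x = \sum_i \mu_i \langle x, q_i\rangle^2$ with $\sum_i \langle x, q_i\rangle^2 = 1$, and note the maximum of this convex combination of eigenvalues equals the largest eigenvalue — but invoking Rayleigh directly is cleaner, so that is the route I would take.
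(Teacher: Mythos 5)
Your proposal is correct and follows essentially the same route as the paper: symmetrize the rank-one form to $\frac{ab\transpose + ba\transpose}{2a\transpose b}$, invoke the Rayleigh quotient characterization of $\lambda_{\max}$, and read off the answer from the spectrum in \lemref{lem:quad_eigen_spectrum}. Your extra check via Cauchy--Schwarz that $\frac{1}{2} + \frac{\|a\|\|b\|}{2|a\transpose b|}$ is indeed the largest eigenvalue is a worthwhile detail the paper leaves implicit.
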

\begin{proof}
	We have
	\begin{align}
		\max_{\|x\| = 1} \, x\transpose \left(\frac{ab\transpose}{a\transpose b} \right) x
		&=
		\max_{\|x\| = 1} \, x\transpose \left(\frac{ab\transpose + ba\transpose}{2a\transpose b} \right) x
		=
		\lambda_{\text{max}}\left( \frac{ab\transpose + ba\transpose}{2a\transpose b} \right)
		= \frac{1}{2} + \frac{\|a\|\|b\|}{2 |a\transpose b|}
	\end{align}
	by \lemref{lem:quad_eigen_spectrum}.
\end{proof}

We are now ready to prove \propref{prop:badnews}. From \lemref{lem:quad_max} with $b = H^{-1}a$ we have
\begin{align}
	\max_{\|w\| = 1} \, w\transpose \left(\frac{aa\transpose H^{-1}}{a\transpose H^{-1} a}\right) w
	&= \frac{1}{2} + \frac{\|a\| \cdot \|H^{-1} a\|}{2 |a\transpose H^{-1} a|}
\end{align}

By the Cauchy-Schwarz inequality $|a\transpose H^{-1} a| \leq \|a\| \cdot \|H^{-1} a\|$ with equality if and only if $a$ and $H^{-1} a$ are linearly dependent. This gives the lower bound,
\begin{align}
	\max_{\|w\| = 1} \, w\transpose \left(\frac{aa\transpose H^{-1}}{a\transpose H^{-1} a}\right) w &\geq 1.
\end{align}
For the upper bound, observe that $|a\transpose H^{-1} a| \geq \sigma_{\text{min}}(H^{-1}) \|a\|^2$ and $\|H^{-1} a\| \leq \sigma_{\text{max}}(H^{-1})\|a\|$ to get
\begin{align}
	\frac{\|a\| \cdot \|H^{-1} a\|}{|a\transpose H^{-1} a|}
	&\leq \frac{\|a\| \cdot \sigma_{\text{max}}(H^{-1}) \|a\|}{\sigma_{\text{min}} (H^{-1}) \|a\|^2}
	= \frac{\sigma_{\text{max}}(H^{-1})}{\sigma_{\text{min}}(H^{-1})}
	= \text{cond}(H).
\end{align}

\subsection{Proof of Proposition~\ref{prop:single_lin_expect}}
\label{sec:proof_single_lin_expect}

The following general result on the expected value for a quadratic form is from \citet{Seber:2003}[Thm.~1.5, p.~9]. It can be easily proved by direct evaluation, using the cyclic property of trace, linearity of trace and expectation, and definition of variance.

\begin{lemma}
	\label{lem:gaussian_expectation}
	\textup{\cite{Seber:2003}.}
	Let $X = (x_i)$ be an $n \times 1$ vector of random variables, and let $A$ be an $n \times n$ symmetric matrix. If $\expectation{X} = \mu$ and $\textup{Var}(X) = \Sigma = (\sigma_{ij})$, then
	\begin{align*}
		\expectation{x\transpose \! A x} &= \trace{A\Sigma} + \mu\transpose \! A \mu.
	\end{align*}
\end{lemma}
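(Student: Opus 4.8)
The plan is to exploit the fact that the quadratic form $x\transpose A x$ is a scalar and therefore equal to its own trace, which lets me commute the operation with expectation. First I would write $x\transpose A x = \trace{x\transpose A x} = \trace{A x x\transpose}$, where the second equality is the cyclic property of the trace. This converts an expectation of a scalar inner product into an expectation of a matrix product, which is the form we can manipulate cleanly.

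Next I would push the expectation inside the trace. Since the trace is a finite linear combination of the matrix entries, and expectation is linear, the two operators commute, giving $\expectation{x\transpose A x} = \expectation{\trace{A x x\transpose}} = \trace{A \, \expectation{x x\transpose}}$. No regularity conditions are needed here beyond finiteness of the dimension $n$ and existence of the relevant moments, both of which are implicit in the hypotheses $\expectation{X} = \mu$ and $\textup{Var}(X) = \Sigma$.

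The third step is to evaluate the second-moment matrix using the definition of variance. From $\Sigma = \expectation{(x - \mu)(x - \mu)\transpose} = \expectation{x x\transpose} - \mu\mu\transpose$ we obtain $\expectation{x x\transpose} = \Sigma + \mu\mu\transpose$. Substituting and splitting the trace yields $\expectation{x\transpose A x} = \trace{A\Sigma} + \trace{A \mu\mu\transpose}$, and one final use of the cyclic property collapses the second term to the scalar $\trace{\mu\transpose \! A \mu} = \mu\transpose \! A \mu$, which is exactly the claimed identity.

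There is no genuine obstacle; the result is essentially bookkeeping with the trace. The only points I would state carefully are the justification for exchanging expectation and trace (immediate from linearity, as $n$ is finite) and the correct form of the raw second moment, $\expectation{x x\transpose} = \Sigma + \mu\mu\transpose$. I would also remark that symmetry of $A$ is not actually required for the trace manipulation — the identity holds for arbitrary $A$ — but it is the natural setting for the quadratic forms to which this lemma is subsequently applied.
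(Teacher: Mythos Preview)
Your proposal is correct and matches the paper's approach exactly: the paper states the lemma is proved by direct evaluation using the cyclic property of trace, linearity of trace and expectation, and the definition of variance, which is precisely the sequence of steps you outline. Your remark that symmetry of $A$ is unnecessary is also made in the paper immediately following the lemma.
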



The above result extends to nonsymmetric $A$ since, $x\transpose A x = x\transpose \frac{1}{2}(A + A\transpose) x$ and $\trace{A \Sigma} = \trace{\Sigma A\transpose} = \trace{A\transpose \Sigma}$ so that
\begin{align}
	\trace{A \Sigma} &= \frac{1}{2} \left( \trace{A \Sigma} + \trace{A\transpose \Sigma} \right)
	= \trace{\frac{1}{2}(A + A\transpose) \Sigma}.
\end{align}

Now assuming the quantity $w$ in
\begin{align}
	g\transpose \widehat{g} &= w\transpose \left(I - \frac{a a\transpose H^{-1}}{a\transpose H^{-1}a} \right) w
\end{align}
is isotropic Gaussian distributed, then
\begin{align}
	\expectwrt[w \sim \N(0, I)]{w\transpose \! \left(I - \frac{aa\transpose H^{-1}}{a\transpose H^{-1} a}\right) \! w}
	&= \trace{I - \frac{aa\transpose H^{-1}}{a\transpose H^{-1} a}} \\
	&= \trace{I} - \trace{\frac{aa\transpose H^{-1}}{a\transpose H^{-1} a}} \\
	&= m - 1
\end{align}
where the first line is from \lemref{lem:gaussian_expectation}, the second line is by linearity of trace, and the last line is by the trace of a matrix equalling the sum of its eigenvalues, which is $m$ for the identity and one for the second term by \lemref{lem:quad_eigen_spectrum}.

\subsection{Proof of Proposition~\ref{prop:multi_lin_expect}}
\label{sec:proof_multi_lin_expect}

From \lemref{lem:gaussian_expectation} we have
\begin{align}
	\expectwrt[w \sim \N(0, I)]{w\transpose \! \left(I - A\transpose \! \left(A H^{-1} A\transpose\right)^{-1} \! A H^{-1}\right) w}
	&= \trace{I - A\transpose \! \left(A H^{-1} A\transpose\right)^{-1} \! A H^{-1}}
	\\
	&= \trace{I} - \trace{A\transpose \left(A H^{-1} A\transpose\right)^{-1} \! A H^{-1}}
	\\
	&= \trace{I_m} - \trace{\left(A H^{-1} A\transpose\right)^{-1} \! A H^{-1} A\transpose}
	\\
	&= \trace{I_m} - \trace{I_p}
	\\
	&= m - p
\end{align}
where we have used the cyclic property of trace on the third line, and that $A$ is full rank and $H \succ 0$ on the fourth line.

\subsection{Proof of Proposition~\ref{prop:norm_expect}}
\label{sec:proof_norm_expect}

Let $\widehat{H} = Q \Lambda Q\transpose$ where $\Lambda = \diag{\lambda_1, \ldots, \lambda_m}$ is a diagonal matrix containing the eigenvalues of $\widehat{H}$ arranged in ascending order. Then $H = Q (\Lambda - \lambda I) Q\transpose$.
Since $\widehat{H}$ and $H$ share the same eigenvectors they are simultaneously diagonalizable and so commute. Therefore
\begin{align}
	v\transpose \left(H^{-1} - \frac{H^{-1} y y\transpose H^{-1}}{y\transpose H^{-1} y}\right) \widehat{H}^{-1} v
	&= v\transpose \left(H^{-1}\widehat{H}^{-1} - \frac{H^{-1} y y\transpose H^{-1} \widehat{H}^{-1}}{y\transpose H^{-1} y}\right) v \\
	&= v\transpose \left(H^{-1}\widehat{H}^{-1} H H^{-1} - \frac{H^{-1} y y\transpose \widehat{H}^{-1} H^{-1}}{y\transpose H^{-1} y}\right) v \\
	&= v\transpose H^{-1} \left(\widehat{H}^{-1} H - \frac{y y\transpose \widehat{H}^{-1}}{y\transpose H^{-1} y}\right) H^{-1} v \\
	&= w\transpose \left(\widehat{H}^{-1} H - \frac{y y\transpose \widehat{H}^{-1}}{y\transpose H^{-1} y}\right) \! w
\end{align}
where in the second line we have used that $\widehat{H}$ and $H$ commute in the second term, then factored out $H^{-1}$ in the third line, and substituted $w = H^{-1} v$ in the last line. As for the linear equality constrained case, we can compute expectations,
\begin{align}
	\expectwrt[w \sim \N(0, I)]{g\transpose \widehat{g}} 
	&= \expectwrt[w \sim \N(0, I)] {w\transpose \! \left(\widehat{H}^{-1} H - \frac{y y\transpose \widehat{H}^{-1}}{y\transpose H^{-1} y}\right) \! w} \\
	&= \trace{\widehat{H}^{-1} H - \frac{y y\transpose \widehat{H}^{-1}}{y\transpose H^{-1} y}}
	\\
	&= \trace{\widehat{H}^{-1} H} - \trace{\frac{y y\transpose \widehat{H}^{-1}}{y\transpose H^{-1} y}}
	\\
	&= \trace{Q\Lambda^{-1} Q\transpose Q(\Lambda - \lambda I)Q\transpose} - \trace{\frac{y\transpose \widehat{H}^{-1} y}{y\transpose H^{-1} y}}
	\\
	&= \trace{\Lambda^{-1} (\Lambda - \lambda I)} - \trace{\frac{y\transpose \widehat{H}^{-1} y}{y\transpose H^{-1} y}}
	\\
	&= \sum_{i=1}^{m} \frac{\lambda_i - \lambda}{\lambda_i} - \frac{y\transpose \widehat{H}^{-1} y}{y\transpose H^{-1} y}
\end{align}
where $\lambda_1 \leq \lambda_2 \leq \cdots \leq \lambda_m$ are the eigenvalues of $\widehat{H}$ and 
$\lambda_1 - \lambda \leq \lambda_2 - \lambda \leq \cdots \leq \lambda_m - \lambda$ are the eigenvalues of $H$.

\subsection{Proof of Proposition~\ref{prof:norm_expect_cases}}
\label{sec:proof_norm_expect_cases}

We consider each case.

{\bf Case 1 ($\widehat{H} \succ 0$, $\lambda < \lambda_1$).} In this case $H$ is positive definite.
Write $\widehat{H} = Q \Lambda Q\transpose$ and $H = Q (\Lambda - \lambda I)Q\transpose$. Then
\begin{align}
	\min_{\|y\| = 1} \, -\frac{y\transpose \widehat{H}^{-1} y}{y\transpose H^{-1} y}
	&= -\max_{\|y\| = 1} \, \frac{y\transpose \widehat{H}^{-1} y}{y\transpose H^{-1} y} \\
	&= -\max_{\|y\| = 1} \, y\transpose H^{1/2}\widehat{H}^{-1}H^{1/2} y  \\
	&= -\max_{\|y\| = 1} \, y\transpose \left(Q(\Lambda - \lambda I)^{1/2} Q\transpose\right) Q\Lambda^{-1}Q\transpose \left(Q(\Lambda - \lambda I)^{1/2} Q\transpose\right) y  \\	
	&= -\max_{\|y\| = 1} \, y\transpose (\Lambda - \lambda I)^{1/2} \Lambda^{-1} (\Lambda - \lambda I)^{1/2} y  \\	
	&= -\max_{i = 1, \ldots, m} \left\{ \frac{\lambda_i - \lambda}{\lambda_i} \right\} \\
	&= \begin{cases}
		-\frac{\lambda_m - \lambda}{\lambda_m}, & \text{if $\lambda \geq 0$} \\
		-\frac{\lambda_1 - \lambda}{\lambda_1}, & \text{otherwise.}
	\end{cases}
\end{align}
Therefore,
\begin{align}
	\expectwrt[w \sim \N(0, I)] {g\transpose \widehat{g}}
	&\geq \sum_{i=1}^{m} \frac{\lambda_i - \lambda}{\lambda_i} - \max_{i = 1, \ldots, m} \left\{ \frac{\lambda_i - \lambda}{\lambda_i} \right\} \\
	&= \begin{cases}
		\sum_{i=1}^{m-1} \frac{\lambda_i - \lambda}{\lambda_i}, & \text{if $\lambda \geq 0$} \\
		\sum_{i=2}^{m} \frac{\lambda_i - \lambda}{\lambda_i}, & \text{otherwise}
	\end{cases} \\
	&\geq 0
\end{align}
since each $\frac{\lambda_i - \lambda}{\lambda_i}$ is positive.

{\bf Case 2 ($\widehat{H} \succ 0$, $\lambda > \lambda_m$).} In this case $H$ is negative definite, and we have
\begin{align}
	\max_{\|y\| = 1} \, -\frac{y\transpose \widehat{H}^{-1} y}{y\transpose H^{-1} y}
	&= \max_{\|y\| = 1} \, \frac{y\transpose \widehat{H}^{-1} y}{y\transpose (-H^{-1}) y} \\
	&= \max_{\|y\| = 1} \, y\transpose (-H)^{1/2}\widehat{H}^{-1}(-H)^{1/2} y  \\
	&= \max_{\|y\| = 1} \, y\transpose \left(Q(\lambda I - \Lambda)^{1/2} Q\transpose\right) Q\Lambda^{-1}Q\transpose \left(Q(\lambda I - \Lambda)^{1/2} Q\transpose\right) y  \\	
	&= \max_{\|y\| = 1} \, y\transpose (\lambda I - \Lambda)^{1/2} \Lambda^{-1} (\lambda I - \Lambda)^{1/2} y  \\	
	&= \max_{i = 1, \ldots, m} \left\{ \frac{\lambda - \lambda_i}{\lambda_i} \right\} \\
	&= \frac{\lambda - \lambda_1}{\lambda_1}.
\end{align}
Therefore,
\begin{align}
	\expectwrt[w \sim \N(0, I)] {g\transpose \widehat{g}}
	&\leq \sum_{i=1}^{m} \frac{\lambda_i - \lambda}{\lambda_i} + \frac{\lambda - \lambda_1}{\lambda_1} \\
	&= \sum_{i=2}^{m} \frac{\lambda_i - \lambda}{\lambda_i} \\
	&\leq 0
\end{align}
since each $\frac{\lambda_i - \lambda}{\lambda_i}$ is negative.

\section{Experimental Details}
\label{sec:experimental_details}

We follow the same experimental procedure for all three example optimization problems---Euclidean projection onto the unit sphere, optimal transport, and eigen decomposition---as depicted in \figref{fig:experimental_setup}. The network architecture consists of a three-layer multi-layer perceptron (MLP) with ReLU activation layers. The MLP maps $d$-dimensional raw input data $z_b$ into the input for a deep declarative node (also known as a differentiable optimization layer) denoted $x_b$. This is an $n$-dimensional vector for Euclidean projection, an $m$-by-$n$ dimensional matrix for optimal transport (for simplicity we set $n = m$), and an $m$-by-$m$ real symmetric matrix for eigen decomposition. Using this input the declarative node solves the associated optimization problem, outputting the solution $y_b$ corresponding to $z_b$. As such the output of the network $y_b$ can be thought of as a function of input $x_b$ and MLP parameters $\theta$.

A single batch of ten input-target pairs $\{(z_b, y^\star_b)\}_{b=1}^{10}$ is randomly generated and used as training data for the parameters $\theta$ of the MLP. We use the AdamW optimizer~\cite{Loshchilov:ICLR19} with a learning rate of $10^{-3}$ and run for a total of 500 iterations. The loss function of Euclidean projection and optimal transport is the mean-square-error,
\begin{align}
	\Ell(\theta) &= \frac{1}{B} \sum_{b=1}^{B} \|y_b(z_b, \theta) - y^\star_b\|_2^2 
\end{align}
whereas for eigen decomposition we use the mean absolute-value of the cosine similarity,
\begin{align}
	\Ell(\theta) &= \frac{1}{B} \sum_{b=1}^{B} |y_b(z_b, \theta)\transpose y^\star_b|.
\end{align}
The latter allows us to seamlessly deal with the sign ambiguity of eigenvectors (i.e., if $q$ is a unit eigenvector then so is $-q$). 

We run five repeats of each experiment, randomly resampling the training data for each run. Learning curve plots show the loss function versus training iteration for each individual run (light) and the average over all five runs (dark). 

For Euclidean projection and optimal transport we include two different input settings, $d=5$ and $d=100$, which we denote as under and over parameterized in the plots. This reflects the fact that it is easier to learn a mapping from high-dimensional input $z_b$ to arbitrary target $y_b^\star$ than for low-dimensional $z_b$. We set $m=10$ for both problems. For eigen decomposition we experiment with four different settings: (a) a loss on all eigenvectors $y_k$ for a general input matrix $X$, (b) a loss on just the eigenvector corresponding to the maximum eigenvalue for general input matrix $X$, (c) the same loss but with negative definite input matrix, and (d) the same loss but with a rank-2 positive definite input matrix. In all cases we set $d=5$ and $m=10$.

In addition to the learning curves we plot the cosine-similarity between the true and approximate gradient of the loss with respect to the input of the declarative node $x_b$. This is done for each point on the learning curve for approximate gradient. A value greater than zero indicates that the corresponding approximate gradient is a descent direction with respect to $x_b$. We note that this does not necessarily mean that it is a descent direction with respect to the parameters $\theta$ of the MLP, which depends on the structure of $\dd[\theta] x_b$. In other words, a descent direction for $x_b$ does not guarantee a descent direction for $\theta$.

All experiments were run using PyTorch 1.13.0~\cite{PyTorch} on an Intel i7-8565U CPU @ 1.80GHz. Full source code available at \url{http://deepdeclarativenetworks.com}.

\end{document}